\def\eqref#1{equation~\ref{#1}}
\def\1{\bm{1}}
\DeclareMathAlphabet{\mathsfit}{\encodingdefault}{\sfdefault}{m}{sl}
\SetMathAlphabet{\mathsfit}{bold}{\encodingdefault}{\sfdefault}{bx}{n}
\DeclareMathOperator{\sign}{sign}
\pgfplotsset{compat=1.17}
\newtheorem{theorem}{Theorem}
\newtheorem{lemma}{Lemma}
\DeclareMathOperator{\sgn}{sign}
\title{\emph{Multi-Prize Lottery Ticket Hypothesis}: \\Finding Accurate Binary Neural Networks by Pruning A Randomly Weighted Network}
\author{James Diffenderfer \& Bhavya Kailkhura\\
Center for Applied Scientific Computing\\
Lawrence Livermore National Laboratory\\
Livermore, CA 94550, USA \\
\texttt{\{diffenderfer2,kailkhura1\}@llnl.gov}
}
\begin{document}

\maketitle

\begin{abstract}
Recently, \cite{frankle2018lottery} demonstrated that randomly-initialized dense networks contain subnetworks that once found can be trained to reach test accuracy comparable to the trained dense network. However, finding these high performing trainable subnetworks is expensive, requiring iterative process of training and pruning weights. 
In this paper, we propose (and prove) a stronger \emph{Multi-Prize Lottery Ticket Hypothesis}:

\emph{A sufficiently over-parameterized neural network with random weights contains several subnetworks (winning tickets) that (a) have comparable accuracy to a dense target network with learned weights (prize 1), (b) do not require any further training to achieve prize 1 (prize 2), and (c) is robust to extreme forms of quantization (i.e., binary weights and/or activation) (prize 3).}

\noindent This provides a new paradigm for learning compact yet highly accurate binary neural networks simply by pruning and quantizing randomly weighted full precision neural networks.
We also propose an algorithm for finding multi-prize tickets (MPTs) and test it by performing a series of experiments on CIFAR-10 and ImageNet datasets. Empirical results indicate that as models grow deeper and wider, multi-prize tickets start to reach similar (and sometimes even higher) test accuracy compared to their significantly larger and full-precision counterparts that have been weight-trained.
Without ever updating the weight values, our MPTs-1/32 not only set new binary weight network state-of-the-art (SOTA) Top-1 accuracy -- 94.8\% on CIFAR-10 and 74.03\% on ImageNet -- but also outperform their full-precision counterparts by 1.78\% and 0.76\%, respectively.
Further, our MPT-1/1 achieves SOTA Top-1 accuracy (91.9\%) for binary neural networks on CIFAR-10.
Code and pre-trained models are available at: \url{https://github.com/chrundle/biprop}.
\end{abstract}

\section{Introduction}

Deep learning (DL) has made a significant breakthroughs in a wide range of applications~\citep{goodfellow2016deep}.
These performance improvements can be attributed to the significant growth in the model size and the availability of massive computational resources to train such models. 
Therefore, these gains have come at the cost of large memory consumption, high inference time, and increased power consumption. 
This not only limits the potential applications where DL can make an impact but also have some serious consequences, such as, (a) generating huge carbon footprint, and (b) creating roadblocks to the democratization of AI. 
Note that significant parameter redundancy and a large number of floating-point operations are key factors incurring the these costs. Thus, for discarding the redundancy from DNNs, one can either (a) \emph{Prune:} remove non-essential connections from an existing dense network, or (b) \emph{Quantize:} constrain the full-precision (FP) weight and activation values to a set of discrete values which allows them to be represented using fewer bits. Further, one can exploit the complementary nature of pruning and quantization to combine their strengths.

Although pruning and quantization\footnote{A detailed discussion on related work on pruning and quantization is provided in Appendix~\ref{sec:relatedwork}.
} are typical approaches used for compressing DNNs~\citep{cheng2017survey}, it is not clear under what conditions and to what extent compression can be achieved without sacrificing the accuracy. 
The most extreme form of quanitization is binarization, where weights and/or activations can only have two possible values, namely $-1(0)$ or $+1$ (the interest of this paper). In addition to saving memory, binarization results in more power efficient networks with significant computation acceleration since expensive multiply-accumulate operations (MACs) can be replaced by cheap XNOR and bit-counting operations~\citep{qin2020binary}. 
In light of these benefits, it is of interest to question if conditions exists such that a binarized DNN can be pruned to achieve accuracy comparable to the dense FP DNN.
More importantly, even if these favourable conditions are met then how do we find these extremely compressed (or compact) and highly accurate subnetworks?


Traditional pruning schemes have shown that a pretrained DNN can be pruned without a significant loss in the performance. Recently, \citep{frankle2018lottery} made a breakthrough by showing that dense network contain sparse subnetworks that can match the performance of the original network when trained from scratch with weights being reset to their initialization (\emph{Lottery Ticket Hypothesis}).
Although the original approach to find these subnetworks still required training the dense network, some efforts~\citep{2wang2020pruning, you2019drawing, 1wang2020picking} have been carried out to overcome this limitation. Recently a more intriguing phenomenon has been reported -- a dense network with random initialization contains subnetworks that achieve high accuracy, without any further training~\citep{zhou2019deconstructing, ramanujan2019whats, malach2020proving, orseau2020logarithmic}. These trends highlight good progress being made towards \emph{efficiently} and \emph{accurately} pruning DNNs.

In contrast to these positive developments for pruning, results on binarizing DNNs have been mostly negative. To the best of our knowledge, post-training schemes have not been successful in binarizing pretrained models without retraining. 
Even with training binary neural networks (BNNs) from scratch (though inefficient), the community has not been able to make BNNs achieve comparable results to their full precision counterparts. The main reason being that network structures and weight optimization techniques are predominantly developed for full precision DNNs and may not be suitable for training BNNs. Thus, closing the gap in accuracy between the full precision and the binarized version may require a paradigm shift. Furthermore, this also makes one wonder if \emph{efficiently} and \emph{accurately} binarizing DNNs similar to the recent trends in pruning is ever feasible.

In this paper, we show that a randomly initialized dense network contains extremely sparse binary subnetworks that without any weight training (i.e., \emph{efficient}) have comparable performance to their trained dense and full-precision counterparts (i.e., \emph{accurate}). Based on this, we state our hypothesis:

\noindent
\begin{tcolorbox}[colframe=black,colback=lightgray!50,boxrule=1pt,boxsep=4pt,left=1pt,right=1pt,top=0pt,bottom=0pt]
\noindent \textbf{Multi-Prize Lottery Ticket Hypothesis.} \emph{A sufficiently over-parameterized neural network with random weights contains several subnetworks (winning tickets) that (a) have comparable accuracy to a dense target network with learned weights (prize 1), (b) do not require any further training to achieve prize 1 (prize 2), and (c) is robust to extreme forms of quantization (i.e., binary weights and/or activation) (prize 3).}
\end{tcolorbox}
\vspace*{-2.5mm}

\begin{figure}[!t]
    \centering
    \includegraphics[width=0.77\textwidth]{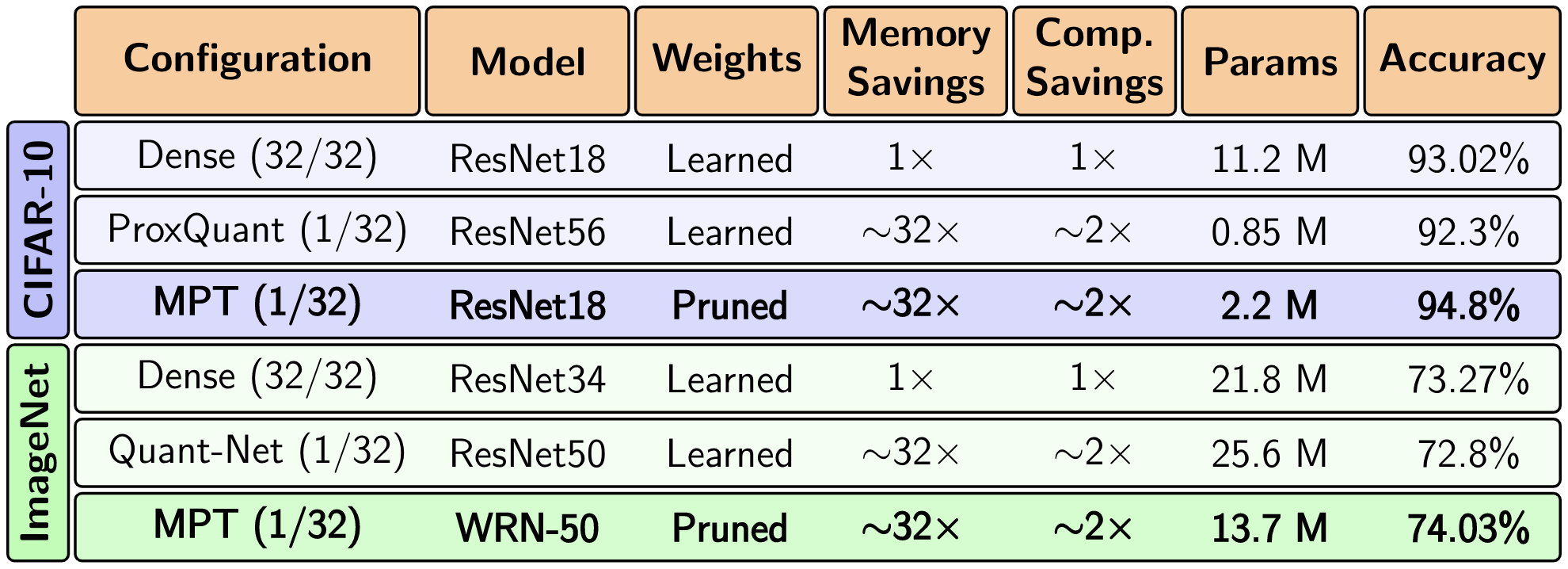}
    \caption{{\bfseries Multi-Prize Ticket Performance}: Multi-prize tickets, obtained only by pruning and binarizing random networks, outperforms trained full precision and SOTA binary weight networks.}
    \label{fig:teaser-figure}
\end{figure}

\paragraph{Contributions.}
First, we propose the multi-prize lottery ticket hypothesis as a new perspective on finding neural networks with drastically reduced memory size, much faster test-time inference and lower power consumption compared to their dense and full-precision counterparts. 
Next, we provide theoretical evidence of the existence of highly accurate binary subnetworks within a randomly weighted DNN (i.e., proving the multi-prize lottery ticket hypothesis). 
Specifically, we mathematically prove that we can find an $\varepsilon$-approximation of a fully-connected ReLU DNN with width $n$ and depth $\ell$ using a sparse binary-weight DNN of sufficient width. Our proof indicates that this can be accomplished by pruning and binarizing the weights of a randomly weighted neural network that is a factor $O(n^{3/2} \ell / \varepsilon)$ wider and $2 \ell$ deeper.
To the best of our knowledge, this is the first theoretical work proving the existence of highly accurate 
binary subnetworks within a sufficiently overparameterized randomly initialized neural network.
Finally, we provide \textbf{biprop} (\textbf{bi}narize-\textbf{pr}une \textbf{op}timizer) in Algorithm~\ref{alg:quant-opt} to identify MPTs within randomly weighted DNNs and empirically test our hypothesis. This provides a completely new way to learn BNNs without relying on weight-optimization.

\paragraph{Results.}
We explore two variants of multi-prize tickets -- one with binary weights (MPT-1/32) and other with binary weights and activation (MPT-1/1) where $x/y$ denotes $x$ and $y$ bits to represent weights and activation, respectively. MPTs we find have $60-80\%$ fewer parameters than the original network. 
We perform a series of experiments on on small and large scale datasets for image recognition, namely CIFAR-10~\citep{krizhevsky2009learning} and ImageNet~\citep{deng2009imagenet}. 
On CIFAR-10, we test the performance of multi-prize tickets against the trend of making the model deeper and wider. We found that as models grow deeper and wider, both variants of multi-prize tickets start to reach similar (and sometimes even higher) test accuracy compared to the dense and full precision original network with learned weights. 
In other words, the performance of multi-prize tickets improves with the amount of redundancy in the original network.  
We also carry out experiments with state-of-the-art (SOTA) architectures on CIFAR-10 and ImageNet datasets with an aim to investigate their redundancy. We find that within most randomly weighted SOTA DNNs reside extremely compact (i.e., sparse and binary) subnetworks which are smaller than, but match the performance of trained target dense and full precision networks. Furthermore, with minimal hyperparameter tuning, our MPTs achieve Top-1 accuracy comparable to (or higher than) SOTA BNNs.
The performance of MPTs is further improved by allowing the parameters in BatchNorm layer to be learned.
Finally, on both CIFAR-10 and ImageNet, MPT-1/32 subnetworks outperform their significantly larger and full-precision counterparts that have been weight-trained. 


\section{Multi-Prize Lottery Tickets: Theory and Algorithms}
We first prove the existence of MPTs in an overparameterized randomly weighted DNN. For ease of presentation, we state an informal version of Theorem~\ref{thm:bin-subnetwork} which can be found in Appendix~\ref{sec:new-existence-bin-init}. We then explore two variants of tickets (MPT-1/32 and MPT-1/1) and provide an algorithm to find them. 

\subsection{Proving the Multi-Prize Lottery Tickets Hypothesis} \label{sec:proving-mplth}
In this section we seek to answer the following question: \emph{What is the required amount of over-parameterization such that a randomly weighted neural network can be compressed to a sparse binary subnetwork that approximates a dense trained target network?}


\begin{theorem}(Informal Statement of Theorem~\ref{thm:bin-subnetwork}) \label{thm:informal-binary-weight-subnetwork}
Let $\varepsilon, \delta > 0$. For every fully-connected (FC) target network with ReLU activations of depth $\ell$ and width $n$ with bounded weights, a random binary FC network with ReLU activations of depth $2\ell$ and width $O\left( (\ell n^{3/2}/\varepsilon) + \ell n \log(\ell n / \delta) \right)$ contains with probability $(1-\delta)$ a binary subnetwork that approximates the target network with error at most $\varepsilon$.
\end{theorem}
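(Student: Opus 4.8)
The plan is to follow the ``strong lottery ticket'' template of reducing global network approximation to the independent approximation of individual weights, and then to supply a \emph{binary} subset-sum lemma tailored to $\pm1$ weights in place of the continuous subset-sum arguments used for real-valued tickets. First I would normalize the target: since the target weights are bounded and the inputs lie in a bounded domain, I can rescale each target weight into $[-1,1]$ and bound every intermediate activation by a constant, so that the whole target map is Lipschitz with a constant controlled by $n$ and $\ell$. The error will then be tracked layerwise: if each simulated layer reproduces its target pre-activations up to $\eta$ in the sup-norm, a telescoping Lipschitz argument propagates this to a global error of order $\ell\cdot(\mathrm{Lip})\cdot\eta$, so it suffices to achieve per-layer accuracy $\eta\approx\varepsilon/(\ell\cdot\mathrm{Lip})$.

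The core building block is a scalar gadget that turns random binary edges into an arbitrary bounded target weight by pruning. Using the identity $\mathrm{ReLU}(t)-\mathrm{ReLU}(-t)=t$, I can represent a signed linear map with ReLU units at the cost of a constant blow-up and one extra layer per target layer; this is exactly where the depth $2\ell$ comes from. To realize a target weight $w\in[-1,1]$ I would draw a pool of $m$ random binary candidates $b_i\in\{-1,+1\}$ together with the scaling factor $\alpha$ that accompanies binary-weight layers, and select a subset $S$ with $\alpha\sum_{i\in S}b_i\approx w$. Because binary candidates realize only integer multiples of $\alpha$, the achievable resolution is $\alpha$ and the achievable range is $\alpha m$; taking $\alpha\approx\rho$ and $m=O(1/\rho)$ then hits any $w\in[-1,1]$ within $\rho$. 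This is the essential reason the width scales like $1/\varepsilon$ rather than the $\log(1/\varepsilon)$ of the real-valued case: discrete candidates have fixed resolution, so $\Theta(1/\rho)$ of them are needed per weight.

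With the gadget in hand I would assemble one target layer as follows: an intermediate binary layer first produces $O(1/\rho)$ pruned copies of each of the $n$ incoming activations (and their negatives), giving intermediate width $O(n/\rho)$, and the next layer forms each of the $n$ target pre-activations by pruning the binary edges into a subset sum. The per-neuron error is $|\sum_j(\hat w_j-w_j)x_j|$; if the subset-sum residuals are arranged to be mean-zero and sub-Gaussian, they accumulate across the $n$ coordinates in $\ell_2$ rather than $\ell_1$, so this error is $O(\rho\,\|x\|_2)=O(\rho\sqrt{n})$ instead of $O(\rho n)$. Setting $\rho\sqrt{n}\approx\varepsilon/\ell$ gives $\rho\approx\varepsilon/(\ell\sqrt{n})$ and hence dominant width $O(n/\rho)=O(\ell n^{3/2}/\varepsilon)$; this concentration step is precisely what turns the naive $n^2$ into $n^{3/2}$. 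Finally, a union bound over all $O(\ell n)$ gadgets (equivalently all $O(\ell n^2)$ target weights) requires each candidate pool to contain enough weights of each sign to realize its subset sum, and guaranteeing this simultaneously with probability $1-\delta$ costs an additive $O(\ell n\log(\ell n/\delta))$ in the width, the second term in the bound.

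The hard part will be the binary subset-sum lemma and, in particular, arranging the residuals so they accumulate gently. For continuous weights one obtains exponentially fine approximation from a logarithmic pool, but for $\pm1$ candidates the resolution is pinned to $\alpha$, so I must (i) prove that a random pool of size $O(1/\rho)$ contains, with high probability, enough $+1$ and $-1$ candidates to represent every target integer in the required range, and (ii) show the rounding residuals can be made, or bounded as if, independent and mean-zero across the $n$ coordinates so that the $\sqrt{n}$ saving is legitimate rather than an artifact of optimistic cancellation. Controlling this interaction between the discreteness of the candidates, the layerwise Lipschitz propagation, and the union bound over all gadgets is where the real work lies; once the binary subset-sum lemma is established, the remainder is bookkeeping that combines the three error sources into the stated width.
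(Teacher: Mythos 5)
Your outline reproduces the paper's construction (Appendix~\ref{sec:new-existence-bin-init}) in all but one step: like the paper, you quantize each bounded target weight to a grid whose resolution is supplied by a gain term on random $\pm 1$ edges, realize each grid value by $\Theta(1/\rho)$ pruned replicas, use the identity $\sigma(a)-\sigma(-a)=a$ to absorb signs at the cost of doubling the depth to $2\ell$, prove existence of enough correctly-signed edges by Hoeffding plus a union bound over coordinates, neurons, and layers (the source of the $\log(\ell n/\delta)$ width term), and compose layers with per-layer budget $\varepsilon/(2\ell)$ via the Lipschitz propagation argument inherited from \citet{malach2020proving}. The genuine divergence is exactly the step you flag as the hard part, and there your plan is both unnecessary and unsound as stated. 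The paper uses \emph{no} probabilistic cancellation anywhere: in Lemma~\ref{lem:new-bin-init-two-layer} the pruned subnetwork computes the quantized weight exactly ($g(\bm{x}) = c_i \varepsilon \sgn(\alpha) x_i$), so the residuals are deterministic rounding errors; within a neuron the budget is split $\ell_1$-wise over the at most $s$ nonzero coordinates (Lemma~\ref{lem:bin-init-two-layer-2}, budget $\varepsilon/s$ each, affordable because the weight range $|w|\leq 1/\sqrt{s}$ caps the replica count at about $\sqrt{s}/\varepsilon$ per coordinate); and across the $n$ outputs the per-neuron budget is $\varepsilon/\sqrt{n}$, with the $\ell_2$ errors summing in quadrature trivially (Lemma~\ref{lem:bin-init-two-layer-3}). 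That deterministic per-output $\ell_2$ budgeting, not concentration, is what turns $n^2$ into $n^{3/2}$.

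Your ``mean-zero sub-Gaussian residual'' device cannot substitute for this. After selecting the best subset, the residuals $\hat{w}_j - w_j$ are deterministic functions of the targets and can be systematically one-sided; randomized rounding would fix the mean but then runs into uniformity over the input ball: $\sup_{\|\bm{x}\|_2 \leq 1} |\langle \hat{\bm{w}}-\bm{w}, \bm{x}\rangle| = \|\hat{\bm{w}}-\bm{w}\|_2$, so no concentration at fixed $\bm{x}$ can beat the deterministic coordinate-wise bound $\|\hat{\bm{w}}-\bm{w}\|_2 \leq \rho\sqrt{n}$ --- which, by Cauchy--Schwarz, is already everything you need under $\ell_2$-bounded inputs (and under the $\ell_\infty$-bounded inputs used in the paper's lemmas, the uniform error is $\|\hat{\bm{w}}-\bm{w}\|_1 \approx \rho n$, so the claimed $\sqrt{n}$ cancellation is genuinely false there; the paper escapes via the $1/\sqrt{s}$ weight range instead). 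So: delete the concentration step, assign each output neuron a budget of $\varepsilon/(2\ell\sqrt{n})$, and use the deterministic accounting; your numbers ($\rho \approx \varepsilon/(\ell\sqrt{n})$, width $O(\ell n^{3/2}/\varepsilon)$) then go through as in the paper. Note finally that the formal Theorem~\ref{thm:bin-subnetwork} carries a sparsity parameter $s$ and block-partitions the intermediate width per output neuron rather than sharing replica pools across outputs, giving $k \geq ns\lceil 32\ell\sqrt{ns}/\varepsilon + 16\log(2ns\ell/\delta)\rceil$; the informal $n^{3/2}$ you were asked to prove corresponds to treating $s$ as bounded.
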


\begin{proof}[Sketch of Proof]
Consider a FC ReLU network 
$F(\bm{x}) = \bm{W}^{(\ell)} \sigma (\bm{W}^{(\ell-1)} \cdots \sigma( \bm{W}^{(1)} \bm{x}))$, 
where $\sigma(x) = \max \{0, x\}$, $\bm{x} \in \mathbb{R}^{d}$, $\bm{W}^{(i)} \in \mathbb{R}^{k_i \times k_{i-1}}$, $k_0 = d$, and $i \in [\ell]$. Additionally, consider a FC network with binary weights given by 
$G(\bm{x}) = \bm{B}^{(\ell')} \sigma (\bm{B}^{(\ell'-1)} \cdots \sigma( \bm{B}^{(1)} \bm{x}))$, 
where $\bm{B}^{(i)} \in \{-1,+1\}^{k_i' \times k_{i-1}'}$, $k_0' = d$, and $i \in [\ell']$. Our goal is to determine a lower bound on the depth, $\ell'$, and the widths, $\{k_i'\}_{i=1}^{\ell'}$, such that with probability $(1-\delta)$ the network $G(\bm{x})$ contains a subnetwork $\tilde{G}(\bm{x})$ satisfying $\| \tilde{G}(\bm{x}) - F(\bm{x}) \| \leq \varepsilon$, for any $\varepsilon > 0$ and $\delta \in (0,1)$.
We first establish lower bounds on the width of a network of the form $\bm{g}(\bm{x}) = \bm{B}^{(2)} \sigma( \bm{B}^{(1)} \bm{x})$ such that with probability $(1- \delta')$ there exists a subnetwork $\tilde{\bm{g}} (\bm{x})$ of $\bm{g}(\bm{x})$ s.t. $\| \tilde{\bm{g}}(\bm{x}) - \sigma( \bm{W} \bm{x}) \| \leq \varepsilon'$, for any $\varepsilon' > 0$ and $\delta' \in (0,1)$. This process is carried out in detail in Lemmas~\ref{lem:new-bin-init-two-layer}, \ref{lem:bin-init-two-layer-2}, and \ref{lem:bin-init-two-layer-3} in Appendix~\ref{sec:new-existence-bin-init}. 
We have now approximated a single layer FC real-valued network using a subnetwork of a two-layer FC binary network. Hence, we can take $\ell' = 2 \ell$ and Lemma~\ref{lem:bin-init-two-layer-3} provides lower bounds on the width of each intermediate layer such that with probability $(1-\delta)$ there exists a subnetwork $\tilde{G}(\bm{x})$ of $G(\bm{x})$ satisfying $\| \tilde{G}(\bm{x}) - F(\bm{x}) \| \leq \varepsilon$. This 
is accomplished in Theorem~\ref{thm:bin-subnetwork} in Appendix~\ref{sec:new-existence-bin-init}.
\end{proof}

To the best of our knowledge this is the first theoretical result proving that a sparse binary-weight DNN that can approximate a real-valued target DNN. As it has been established that real-valued DNNs are universal approximators \citep{scarselli1998universal}, our result carries the implication that sparse binary-weight DNNs are also universal approximators. In relation to the first result establishing the existence of real-valued subnetworks in a randomly weighted DNN approximating a real-valued target DNN \citep{malach2020proving}, the lower bound on the width established in Theorem~\ref{thm:bin-subnetwork} is better than their lower bound of $O\left( \ell^2 n^2 \log(\ell n / \delta) /\varepsilon^2 \right)$. 

\subsection{Finding Multi-Prize Winning Tickets} \label{sec:finding-mpts}
Given the existence of multi-prize winning tickets from Theorem~\ref{thm:bin-subnetwork}, a natural question arises -- \emph{How should we find them?}
In this section, we answer this question by introducing an algorithm for finding multi-prize tickets.\footnote{Although our results are derived under certain assumptions (e.g., fully-connected, ReLU neural network approximated by a subnetwork with binary weights), our algorithm is not restricted by these assumptions.} Specifically, we explore two variants of multi-prize tickets in this paper -- 1) MPT-1/32 where weights are quantized to 1-bit with activations being real valued (i.e., 32-bits) and 2) MPT-1/1 where both weights and activations are quantized to 1-bit. We first outline a generic process for identifying MPTs along with some theoretical motivation for our approach.
 
Given a neural network $g(\bm{x}; \bm{W})$ with weights $\bm{W} \in \mathbb{R}^m$, we can express a subnetwork of $g$ using a binary mask $\bm{M} \in \{ 0,1 \}^m$ as $g(\bm{x}; \bm{M} \odot \bm{W})$, where $\odot$ denotes the Hadamard product. 
Hence, a binary subnetwork can be expressed as  $g(\bm{x}; \bm{M} \odot \bm{B})$, where $\bm{B} \in \{-1, +1\}^m$. Lemma~\ref{lem:new-bin-init-two-layer} in Appendix~\ref{sec:new-existence-bin-init} indicates that rescaling the binary weights to $\{ -\alpha, \alpha \}$ using a gain term  $\alpha \in \mathbb{R}$ is necessary to achieve good performance of the resulting subnetwork. We note that the use of gain terms is common in binary neural networks \citep{qin2020binary,martinez2020training, bulat2019xnor}. 
Combining all this allows us to represent a binary subnetwork as $g(\bm{x}; \alpha (\bm{M} \odot \bm{B}) )$. 

Now we focus on how to update $\bm{M}$, $\bm{B}$, and $\alpha$. Suppose $f(\bm{x}; \bm{W}^*)$ is a target network with optimized weights $\bm{W}^*$ that we wish to approximate. Assuming $g(\bm{x}; \cdot)$ is $\kappa$-Lipschitz continuous yields
\small
\begin{align}
    \underbrace{\| g \left(\bm{x}; \alpha (\bm{M} \odot \bm{B}) \right) - f(\bm{x}; \bm{W}^*) \|}_\text{MPT error}
    &\leq \kappa \underbrace{\| \bm{M} \odot (\bm{W}-\alpha \bm{B}) \|}_\text{Binarization error} + \underbrace{\| g(\bm{x}; \bm{M} \odot \bm{W}) - f(\bm{x}; \bm{W}^*) \|}_\text{Subnetwork error}.
    \label{eq:paper-subnetwork-err-bound}
\end{align}
\normalsize
\noindent 
Hence, the MPT error is bounded above by the error of the subnetwork of $g$ with the original weights and the error from binarizing the current subnetwork. This informs our approach for identifying MPTs: 1) Update a pruning mask $\bm{M}$ that reduces the subnetwork error (lines 7 -- 9 in Algorithm~\ref{alg:quant-opt}), and 2) apply binarization with a gain term that minimizes the binarization error (lines 4 and 10).

We first discuss how to update $\bm{M}$. 
While we could search for $\bm{M}$ by minimizing the subnetwork error in (\ref{eq:paper-subnetwork-err-bound}), this would require the use of a pretrained target network (i.e., $f(\bm{x}; \bm{W}^*)$). To avoid requiring a target network in our method we instead aim to minimize the training loss w.r.t. $\bm{M}$ in the current binary subnetwork.
Directly optimizing over the pruning mask is a combinatorial problem. So to update the pruning mask efficiently we optimize over a set of scores $\bm{S} \in \mathbb{R}^m$ corresponding to each randomly initialized weight in the network. In this approach, each component of the randomly initialized weights is assigned a pruning score. The pruning scores are updated via backpropagation by computing the gradient of the loss function over minibatches 
with respect to the pruning scores (line 7). Then the magnitude of the scores in absolute value are used to identify the $P$ percent of weights in each layer that are least important to the success of the binary subnetwork (line 8). The components of the pruning mask corresponding to these indices are set to $0$ and the remaining components are set to $1$ (line 9). To avoid unintentionally pruning an entire layer of the network, we use a pruning mask for each layer that prunes $P$ percent of the weights in that layer. The choice to use pruning scores to update the mask $\bm{M}$ was due to the fact that it is computationally efficient. The use of pruning scores is a well-established optimization technique used in a range of applications \citep{boyd2009sensor,ramanujan2019whats}.

We now consider how to update $\bm{B}$ and $\alpha$. By keeping $\bm{M}$ fixed, we can derive the following closed form expressions that minimize the binarization error in (\ref{eq:paper-subnetwork-err-bound}):
$\bm{B}^* = \sgn (\bm{W})$ and $\alpha^* = \| \bm{M} \odot \bm{W} \|_1 / \| \bm{M} \|_1$. These closed form expressions indicate that only the gain term needs to be recomputed after each update to $\bm{M}$. Hence, $\bm{B} = \sgn(\bm{W})$ throughout our entire approach (line 4). We update a gain term for each layer of the subnetwork in our approach based on the formula for $\alpha^*$ (line 10). More details on the derivation of 
$\bm{B}^*$ and $\alpha^*$ are provided in Appendix~\ref{sec:biprop-motivation}. 

Pseudocode for our method \textbf{biprop} (\textbf{bi}narize-\textbf{pr}une \textbf{op}timizer) is provided in Algorithm~\ref{alg:quant-opt} and cross-entropy loss is used in our experiments. Note that the process for identifying MPT-1/32 and MPT-1/1 differs only in computation of the gradient. Next, we explain how these gradients can be computed.




\subsubsection{Updating Pruning Scores for Binary-Weight Tickets (MPT-1/32)} \label{sec:biprop-1-32}
As an example, for a FC network where the state at each layer is defined recursively by $\bm{U}^{(1)} = \alpha^{(1)} (\bm{B}^{(1)} \odot \bm{M}^{(1)}) \bm{x}$ and $\bm{U}^{(j)} = \alpha^{(j)} (\bm{B}^{(j)} \odot \bm{M}^{(j)}) \sigma (\bm{U}^{(j-1)})$ we have $\frac{\partial L}{\partial S_{p,q}^{(j)}} = \frac{\partial L}{\partial U_q^{(j)}} \frac{\partial U_q^{(j)}}{\partial M_{p,q}^{(j)}} \frac{\partial M_{p,q}^{(j)}}{\partial S_{p,q}^{(j)}}$. We use the straight-through estimator \citep{bengio2013estimating} for $\frac{\partial M_{p,q}^{(j)}}{\partial S_{p,q}^{(j)}}$ which yields $\frac{\partial L}{\partial S_{p,q}^{(j)}} = \frac{\partial L}{\partial U_q^{(j)}} \alpha^{(j)} B_{p,q}^{(j)} \ \sigma \left( U_p^{(j-1)} \right)$, where $\frac{\partial L}{\partial U_q^{(j)}}$ is computed via backpropagation.

\begin{algorithm}[t!]
\begin{algorithmic}[1] 
\STATE{\textbf{Input}: Neural network $g(\bm{x}; \cdot)$ with 1- or 32-bit activations; Network depth $\ell$; Layer widths $\{k_j\}_{j=1}^{\ell}$; Loss function $L$; Training data $\{ (\bm{x}^{(i)}, \bm{y}^{(i)}) \}_{i=1}^N$; Pruning percentage $P$.}
\STATE{\textit{Randomly Initialize FP Parameters}:
Network weights $\{\bm{W}^{(j)} \}_{j=1}^{\ell}$; Pruning scores $\{ \bm{S}^{(j)} \}_{j=1}^{\ell}$.}
\STATE{\textit{Initialize Layerwise Pruning Masks}:
$\{ \bm{M}^{(j)} \}_{j=1}^{\ell}$ each to $\bm{1}$.} 
\STATE{\textit{Initialize Binary Subnetwork Weights}: $\{ \bm{B}^{(j)} \}_{j=1}^{\ell} \gets \{ \sgn(\bm{W}^{(j)}) \}_{j=1}^{\ell}$.}
\STATE{\textit{Initialize Layerwise Gain Terms}: $\{ \alpha^{(j)} \}_{j=1}^{\ell} \gets \{ \| \bm{M}^{(j)} \odot \bm{W}^{(j)} \|_{1} / \| \bm{M}^{(j)} \|_{1} \}_{j=1}^{\ell}$.} \vspace{1mm}
\FOR{$k=1$ to $N_{epochs}$} \vspace{0.5mm} 
    \STATE{$\bm{S}^{(j)} \gets \bm{S}^{(j)} - \eta \nabla_{\bm{S}^{(j)}} L(\{ \alpha^{(j)} ( \bm{M}^{(j)} \odot \bm{B}^{(j)}) \}_{j=1}^{\ell})$ \hfill Update pruning scores at layer $j$} \vspace{1mm}
    \STATE{$\{ \tau(i) \}_{i=1}^{k_j} \gets $ Sorting of indices $\{i\}_{i=1}^{k_j}$ s.t. $|\bm{S}_{\tau(i)}^{(j)}| \leq |\bm{S}_{\tau(i+1)}^{(j)}|$ \hfill Index sort over values $|\bm{S}^{(j)}|$} 
    \STATE{$\bm{M}_i^{(j)} \gets \mathds{1}_{\{ \tau(i) \geq \lceil k_j P / 100 \rceil \}} (i)$ \hfill Update pruning mask at layer $j$} \vspace{1mm}
    \STATE{$\alpha^{(j)} \gets \| \bm{M}^{(j)} \odot \bm{W}^{(j)} \|_{1} / \| \bm{M}^{(j)} \|_{1}$ \hfill Update gain term at layer $j$} \vspace{1mm}
\ENDFOR
\STATE{\textbf{Output}: Return Binarized Subnetwork $g(\bm{x}; \{ \alpha^{(j)} ( \bm{M}^{(j)} \odot \bm{B}^{(j)}) \}_{j=1}^{\ell} )$.}
\end{algorithmic}
\caption{\textbf{biprop}: Finding multi-prize tickets in a randomly weighted neural network}
\label{alg:quant-opt}
\end{algorithm}

\subsubsection{Updating Pruning Scores for Binary-Activation Tickets (MPT-1/1)} \label{sec:biprop-1-1}
Note that MPT-1/1 uses the $\sign$ activation function. From Section~\ref{sec:biprop-1-32}, it immediately follows that $\frac{\partial L}{\partial S_{p,q}^{(j)}} = \frac{\partial L}{\partial U_q^{(j)}} \alpha^{(j)} B_{p,q}^{(j)} \ \sgn \left( U_p^{(j-1)} \right)$. However, updating $\frac{\partial L}{\partial U_q^{(j)}}$ via backpropagation requires a gradient estimator for the 
$\sgn$ activation function. To motivate our choice of estimator 
note that we can approximate the $\sgn$ function using a quadratic spline parameterized by some $t > 0$:  
\begin{align}
s_t(x) = \left\{
   \begin{array}{ccl}
      -1 &:& x < -t \\
      q_1(x) &:& x \in [-t,0) \\
      q_2(x) &:& x \in [0,t) \\
      1 &:& x \geq t
   \end{array}
   \right.. \label{eq:quad-spline}
\end{align}
In (\ref{eq:quad-spline}), $q_i(x) = a_i x^2 + b_i x + c_i$ and 
suitable values for the coefficients are derived using the following zero- and first-order constraints: $q_1(-t) = -1$, $q_1(0) = 0$, $q_2(0) = 0$, $q_2(t) = 1$, $q_1'(-t) = 0$, $q_1'(0) = q_2'(0)$, and $q_2'(t) = 0$. This yields $q_1(x) = (x/t)^2 + 2(x/t)$ and $q_2(x) = -(x/t)^2 + 2 (x/t)$. As $s_t(x)$ approximates $\sgn(x)$, we can use $s_t'(x)$ as our gradient estimator. Since $q_1'(x) = \frac{2}{t}(1 + \frac{x}{t})$ and $q_2'(x) = \frac{2}{t}(1 - \frac{x}{t})$ it follows that $s_t'(x) = \left[ \frac{2}{t} \left(1 - \frac{|x|}{t} \right) \right] \mathds{1}_{\{ x \in [-t,t] \}} (x)$. The choice to approximate $\sgn$ using a quadratic spline instead of a cubic spline results in a gradient estimator that can be implemented efficiently in PyTorch as \texttt{torch.clamp(2*(1-torch.abs(x)/t)/t,min=0.0)}. We note that $\lim_{t \to 0} s_t (x) = \sgn(x)$, which suggests that smaller values of $t$ yield more suitable approximations. Our experiments use $s_1'(x)$ as the gradient estimator since we found it to work well in practice. Finally, we note that taking $t=1$ in our gradient estimator yields the same value as the gradient estimator in \citep{liu2018bi}, however, our implementation in PyTorch is 6$\times$ more memory efficient. 

\section{Experimental Results}

The primary goal of the experiments in  Section~\ref{sec:exp-mplth} is to empirically verify our \textit{Multi-Prize Lottery Ticket Hypothesis}. As a secondary objective, we would like to determine tunable factors that make randomly-initialized networks amenable to containing readily identifiable Multi-Prize Tickets (MPTs). Thus, we test our hypothesis against the general trend of increasing the model size (depth and width) and monitor the accuracy of the identified MPTs. 
After verifying our \textit{Multi-Prize Lottery Ticket Hypothesis}, we consider the performance of MPTs compared to state-of-the-arts in binary neural networks and their dense counterparts on CIFAR-10 and ImageNet datasets in Section~\ref{sec:exp-sota}. 
Building upon {edge-popup}~\citep{ramanujan2019whats}, we implement Algorithm~\ref{alg:quant-opt} to identify MPTs.\footnote{A comparison of MPT-1/32 found using \textbf{biprop} and edgepopup is provided in Appendix~\ref{sec:mpt-vs-edgepopup}, which demonstrates that \textbf{biprop} outperforms edgepopup.}




\subsection{Where can we expect to find multi-prize tickets?} \label{sec:exp-mplth}


In this section, we empirically test the effect of overparameterization on the performance of MPTs. We overparameterize networks by making them (a) deeper (Sec.~\ref{sec:exp-deep}) and (b) wider (Sec.~\ref{sec:exp-wide}).  

\begin{figure}[h]
    \centering
    \includegraphics[width=\textwidth, trim={6cm 0 6cm 2.5cm}, clip]{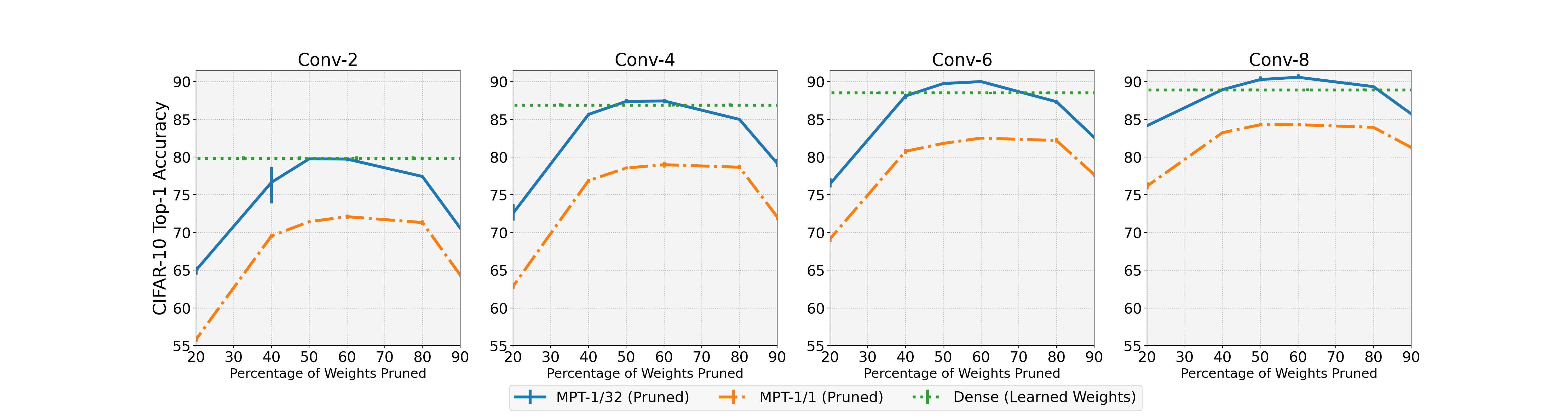}
    \caption{{\bfseries Effect of Varying Depth and Pruning Rate}: Comparing the Top-1 accuracy of small and binary MPTs to a large, full-precision, and weight-optimized network on CIFAR-10.}
    \label{fig:prune-plot}
\end{figure}

We use VGG~\citep{simonyan2014very} variants as our network architectures for searching for MPTs. In each randomly weighted network, we find winning tickets MPT-1/32 and MPT-1/1 for different pruning rates using Algorithm~\ref{alg:quant-opt}. We choose our baselines as dense full-precision models with learned weights. In all experiments, we use three independent initializations and report the average of Top-1 accuracy with with error bars extending to the lowest and highest Top-1 accuracy. Additional experiment configuration details are provided in Appendix~\ref{sec:hyperparameters}.

\subsubsection{Do winning tickets exist in deep networks?} \label{sec:exp-deep}

In this experiment, we empirically test the following hypothesis: \emph{As a network grows deeper, the performance of multi-prize tickets in the randomly initialized network will approach the performance of the same network with learned weights. We are further interested in exploring the required network depth for our hypothesis to be true.}

In Figure \ref{fig:prune-plot}, we vary the depth of VGG architectures ($d=2$ to $8$) and compare the Top-1 accuracy of MPTs (at different pruning rates) with weight-trained dense network. We notice that there exist a range of pruning rates where the performance of MPTs are very similar, and beyond this range the performance drops quickly. Interestingly, as the network depth increases, more parameters can be pruned without hurting the performance of MPTs. For example, MPT-1/32 can match the performance of trained Conv-8 while having only $\sim 20\%$ of its parameter count. Interestingly, the performance gap between MPT-1/32 and MPT-1/1 does not change much with depth across different pruning rates. 
We further note that the performance of MPTs improve when increasing the depth and both start to approach the performance of the dense model with learned weights. This gain starts to plateau beyond a certain depth, suggesting that the MPTs might be approaching the limit of their achievable accuracy. 
Surprisingly, MPT-1/32 performs equally good (or better) than the weight-trained model regardless of having $50-80\%$ lesser parameters and weights being binarized. 

\subsubsection{Do winning tickets exist in wide networks?} \label{sec:exp-wide}
\begin{figure}[!t]
    \centering
    \includegraphics[width=\textwidth, trim={6cm 0 6cm 2.5cm}, clip]{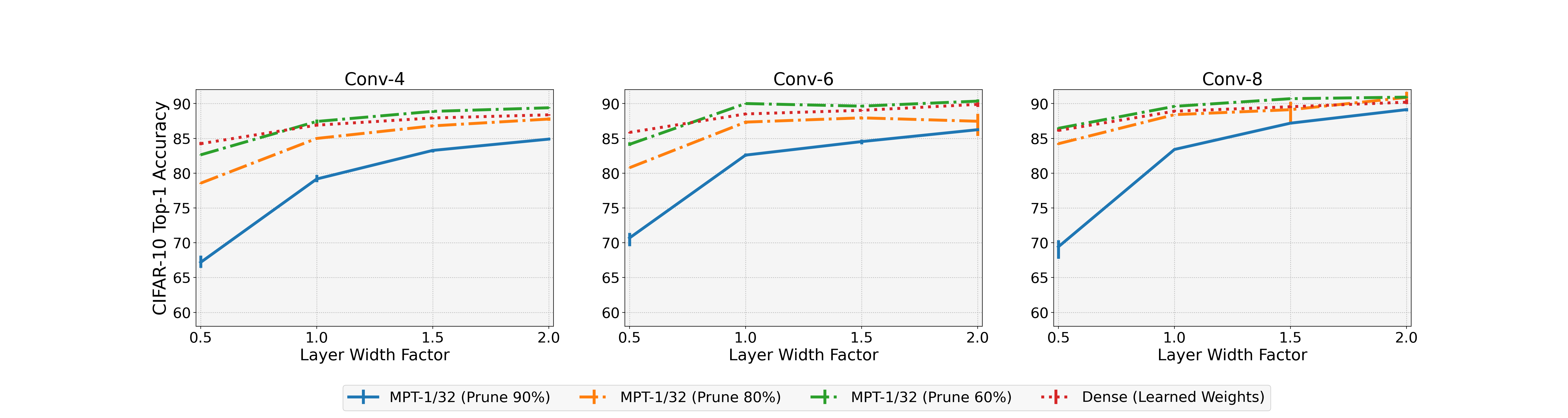}
    \caption{{\bfseries Effect of Varying Width on MPT-1/32}: Comparing the Top-1 accuracy of sparse and binary MPT-1/32 to dense, full-precision, and weight-optimized network on CIFAR-10.}
    \label{fig:mpt1-32-plot}
\end{figure}
\begin{figure}[h]
    \centering
    \includegraphics[width=\textwidth, trim={6cm 0 6cm 2.5cm}, clip]{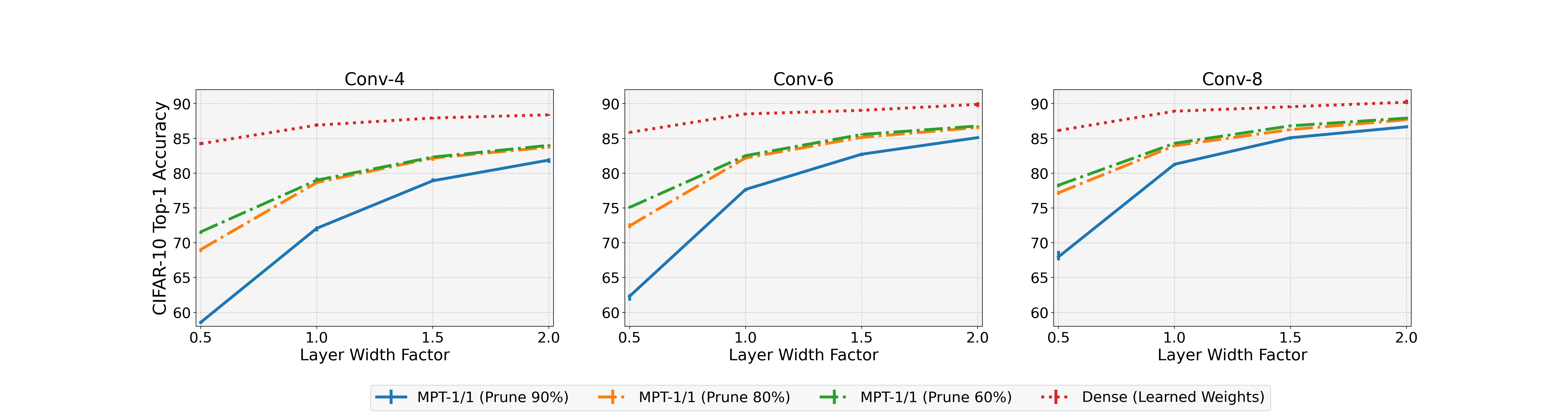}
    \caption{{\bfseries Effect of Varying Width on MPT-1/1}: Comparing the Top-1 accuracy of sparse and binary MPT-1/1 to dense, full-precision, and weight-optimized network on CIFAR-10.}
    \label{fig:mpt1-1-plot}
\end{figure}
Similar to the previous experiment, in this experiment, we empirically test the following hypothesis: \emph{As a network grows wider, the performance of multi-prize tickets in the randomly initialized network will approach the performance of the same network with learned weights. We are further interested in exploring the required layer width for our hypothesis to be true.}

In Figures~\ref{fig:mpt1-32-plot} and \ref{fig:mpt1-1-plot}, we vary the width of different VGG architectures and compare the Top-1 accuracy of MPT-1/32 and MPT-1/1 tickets (at different pruning rates) with weight-trained dense network. A width multiplier of value $1$ corresponds to the models in Figure~\ref{fig:prune-plot}. Performance of all the models improves when increasing the width   
and the performance of both MPT-1/32 and MPT-1/1 start to approach the performance of the dense model with learned weights. Although, this gain starts to plateau beyond a certain width. For both MPT-1/32 and MPT-1/1, as the width and depth increase the performance at different pruning rates approach the same value. This observed phenomenon yields a more significant gain in the performance for MPTs with higher pruning rates. 
Similar to the previous experiment, the performance of MPT-1/32 matches (or exceeds) the performance of dense models for a large range of pruning rates. Furthermore, in the high width regime, a large number of weights ($\sim 90\%$) can be pruned without having a noticeable impact on the performance of MPTs. We also notice that the performance gap between MPT-1/32 and MPT-1/1 decreases significantly with an increase the width which is in sharp contrast with the with the depth experiments where the performance gap between MPT-1/32 and MPT-1/1 appeared to be largely independent of the depth.

\paragraph{Key Takeaways.}
Our experiments verify \textit{Multi-Prize Lottery Ticket Hypothesis} and additionally convey the significance of choosing appropriate network depth and layer width for a given pruning rate. In particular, we find that a network with a large width can be pruned more aggressively without sacrificing much accuracy, while the accuracy of a network with smaller widths suffers when pruning a large percentage of the weights. Similar patterns hold for the depth of the networks as well. The amount of overparametrization needed to approach the performance of dense networks seems to differ for MPT variants -- MPT-1/1 requires higher depth and width compared to MPT-1/32.

\subsection{How Redundant Are State-of-the-Art Deep Neural Networks?} \label{sec:exp-sota}

Having shown that MPTs can perform equally good (or better) than overparameterized networks, this experiment aims to answer: \emph{Are state-of-the-art weight-trained DNNs overparametrized enough that significantly smaller multi-prize tickets can match (or beat) their performance?}


\paragraph{Experimental Configuration.}
Instead of focusing on extremely large DNNs, we experiment with small to moderate size DNNs. Specifically, we analyze the redundancy of following backbone models: (1) VGG-Small and ResNet-18 on CIFAR-10, and (2) WideResNet-34 and WideResNet-50 on ImageNet. As we will show later that even these models are highly redundant, thus, our finding automatically extends to larger models. In this process, we also perform a comprehensive comparison of the performance of our multi-prize winning tickets with state-of-the-art in binary neural networks (BNNs). Details on the experimental configuration are provided in Appendix~\ref{sec:hyperparameters}.

This experiment uses Algorithm~\ref{alg:quant-opt} to find MPTs within randomly initialized backbone networks. We compare the Top-1 accuracy and number of non-zero parameters for our MPT-1/32 and MPT-1/1 tickets with selected baselines in BNNs \citep{qin2020binary}. 
Results for CIFAR-10 and ImageNet are shown in  Tables~\ref{table:cifar-mpt-1-32}, \ref{table:cifar-mpt-1-1} and Tables~\ref{table:imagenet-mpt-1-32}, \ref{table:imagenet-mpt-1-1}, respectively.
Next to each MPT method we include the percentage of weights pruned in parentheses. Motivated by \citep{frankle2020training}, we also include models in which the BatchNorm parameters are learned when identifying the random subnetwork using biprop, indicated by $+$BN. A more comprehensive comparison can be found in Appendix~\ref{sec:sota-comp}.

\begin{table*}[h] 
\parbox{.45\linewidth}{
\centering
\begin{small}
\begin{tabular}{@{}lrrr@{}}\toprule
\textbf{Method} & \textbf{Model} & \textbf{Top-1} & \textbf{Params} \\ \midrule
BinaryConnect & VGG-Small & 91.7 & 4.6 M \\ \hdashline
ProxQuant & ResNet-56 & 92.3 & 0.85 M \\ \hdashline
DSQ & ResNet-20 & 90.2 & 0.27 M \\ \hdashline
IR-Net & ResNet-20 & 90.8 & 0.27 M \\ \hdashline
Full-Precision & ResNet-18 & 93.02 & 11.2 M \\ \hline
MPT (80) & ResNet-18 & 94.66 & 2.2 M \\ \hdashline
\textbf{MPT (80) +BN} & \textbf{ResNet-18} & \textbf{94.8} & \textbf{2.2 M} \\ 
\bottomrule
\end{tabular}
\end{small}
\vspace{-0.1in}
\caption{Comparison of MPT-1/32 with trained binary-1/32 networks on CIFAR-10.}
\label{table:cifar-mpt-1-32}
}
\hspace{.055\linewidth}
\parbox{.45\linewidth}{
\centering
\begin{small}
\begin{tabular}{@{}lrrr@{}}\toprule
\textbf{Method} & \textbf{Model} & \textbf{Top-1} & \textbf{Params} \\ \midrule
BNN & VGG-Small & 89.9 & 4.6 M \\ \hdashline
XNOR-Net & VGG-Small & 89.8 & 4.6 M \\ \hdashline
DSQ & VGG-Small & 91.7 & 4.6 M \\ \hdashline
IR-Net & ResNet-18 & 91.5 & 11.2 M \\
{Full-Precision} & VGG-Small & 93.6 & 4.6 M \\ \hline
MPT (75) & VGG-Small & 88.52 & 1.44 M \\ \hdashline
\textbf{MPT (75) +BN} & \textbf{VGG-Small} & \textbf{91.9} & \textbf{1.44 M} \\
\bottomrule
\end{tabular}
\end{small}
\vspace{-0.1in}
\caption{Comparison of MPT-1/1 with trained binary-1/1 networks on CIFAR-10.}
\label{table:cifar-mpt-1-1}
}
\end{table*} 
\vspace*{-0.5mm}



\begin{table*}[h] 
\parbox{.45\linewidth}{
\centering
\begin{small}
\begin{tabular}{@{}lrrr@{}}\toprule
\textbf{Method} & \textbf{Model} & \textbf{Top-1} & \textbf{Params} \\ \midrule
ABC-Net & ResNet-18 & 62.8 & 11.2 M \\ \hdashline
BWN & ResNet-18 & 60.8 & 11.2 M \\ \hdashline
IR-Net & ResNet-34 & 70.4 & 21.8 M \\ \hdashline
Quant-Net & ResNet-50 & 72.8 & 25.6 M \\ \hdashline
Full-Precision & ResNet-34 & 73.27 & 21.8 M \\ \hline
MPT (80) & WRN-50 & 72.67 & 13.7 M \\ \hdashline
\textbf{MPT (80) +BN} &
\textbf{WRN-50} & \textbf{74.03} & \textbf{13.7 M} \\
\bottomrule
\end{tabular}
\end{small}
\vspace{-0.1in}
\caption{Comparison of MPT-1/32 with trained binary-1/32 networks on ImageNet.}
\label{table:imagenet-mpt-1-32}
}
\hspace{.072\linewidth}
\parbox{.45\linewidth}{
\centering
\begin{small}
\begin{tabular}{@{}lrrr@{}}\toprule
\textbf{Method} & \textbf{Model} & \textbf{Top-1} & \textbf{Params} \\ \midrule
BNN & AlexNet & 27.9 & 62.3 M \\ \hdashline
XNOR-Net & AlexNet & 44.2 & 62.3 M\\ \hdashline
ABC-Net & ResNet-34 & 52.4 & 21.8 M \\ \hdashline
\textbf{IR-Net} & \textbf{ResNet-34} & \textbf{62.9} & \textbf{21.8 M} \\ \hdashline
Full-Precision & ResNet-34 & 73.27& 21.8 M\\ \hline
MPT (60) & WRN-34 & 45.06 & 19.3 M \\ \hdashline
MPT (60) +BN & WRN-34 & 52.07 & 19.3 M \\
\bottomrule
\end{tabular}
\end{small}
\vspace{-0.1in}
\caption{Comparison of MPT-1/1 with trained binary-1/1 networks on ImageNet.}
\label{table:imagenet-mpt-1-1}
}
\end{table*}
\vspace{0.1in}

Our results highlight that SOTA DNN models are extremely redundant. 
For similar parameter count, our binary MPT-1/32 models outperform even full-precision models with learned weights. 
When compared to state-of-the-art in BNNs, with minimal hyperparameter tuning our multi-prize tickets achieve comparable (or higher) Top-1 accuracy.
Specifically, our MPT-1/32 outperform trained binary weight networks on CIFAR-10 and ImageNet and our MPT-1/1 outperforms trained binary weight and activation networks on CIFAR-10. 
Further, on CIFAR-10 and ImageNet, MPT-1/32 networks with significantly reduced parameter counts outperform dense and full precision networks with learned weights.
Searches for MPT-1/1 in BNN-specific architectures~\citep{bnas,bats} and adopting other commonly used tricks to improve model \& representation capacities~\citep{highcapacity,slbn,rbnn,siman} are likely to yield MPT-1/1 networks with improved performance.
For example, up to a 7\% gain in the MPT-1/1 accuracy was achieved by simply allowing BatchNorm parameters to be updated.
Additionally, alternative approaches for updating the pruning mask in \textbf{biprop} could alleviate issues with back-propagating gradients through binary activation networks. 

\section{Discussion and Implications}

Existing compression approaches (e.g., pruning and binarization) typically rely on some form of weight-training. This paper showed that a sufficiently overparametrized randomly weighted network contains binary subnetworks that achieve high accuracy (comparable to dense and full precision original network with learned weights) without any training. We referred to this finding as the \emph{Multi-Prize Lottery Ticket Hypothesis.} We also proved the existence of such winning tickets and presented a generic procedure to find them. 
Our comparison with state-of-the-art neural networks corroborated our hypothesis. 
With minimal hyperparameter tuning, our binary weight multi-prize tickets outperformed current state-of-the-art in BNNs and proved its practical importance.
Our work has several important practical and theoretical implications.

\paragraph{Algorithmic.}
Our \textbf{biprop} framework enjoys certain advantages over traditional weight-optimization. 
First, contemporary experience suggests that sparse BNN training from scratch is challenging. Both sparseness and binarization bring their own challenges for gradient-based weight training -- getting stuck at bad local minima in the sparse regime, incompatibility of back-propagation due to discontinuity in activation function, etc. Although we used gradient-based approaches in this paper, \textbf{biprop} is flexible to accommodate different class of algorithms that might avoid the pitfalls of gradient-based weight training.  
Next, in contrast to weight-optimization that requires large model size and massive compute resources to achieve high performance, our hypothesis suggests that one can achieve similar performance without ever training the large model. Therefore, strategies such as fast ticket search~\citep{you2019drawing} or forward ticket selection~\citep{ye2020good} can be developed to enable more efficient ways of finding--or even designing--MPTs.
Finally, as opposed to weight-optimization, \textbf{biprop} by design achieves compact yet accurate models.

\paragraph{Theoretical.}
MPTs achieve similar performance as the model with learned weights. 
First, this observation notes the benefit of overparameterization in the neural network learning and reinforces the idea that an important task of gradient descent (and learning in general) may be to effectively compress overparametrized models to find multi-prize tickets.  
Next, our results highlight the expressive power of MPTs -- since we showed that compressed subnetworks can approximate any target neural network who are known to be universal approximators, our MPTs are also universal approximators. 
Finally, the multi-prize lottery ticket hypothesis also uncovers the generalization properties of DNNs. Generalization theory for DL is still in its infancy and its not clear what and how DNNs learn~\citep{neyshabur2017exploring}. 
Multi-prize lottery ticket hypothesis may serve as a valuable tool for answering such questions as it indicates the dependence of generalization on the compressiblity.

\paragraph{Practical.}

Huge storage and heavy computation requirements of state-of-the-art deep neural networks inevitably limit their applications in practice. Multi-prize tickets are significantly lighter, faster, and efficient while maintaining performance. This unlocks a range of potential applications DL could be applied to (e.g., applications with resource-constrained devices such as mobile phones, embedded devices, etc.). Our results also indicate that existing SOTA models might be spending far more compute and power than is needed to achieve a certain performance. In other words, SOTA DL models have terrible energy efficiency and significant carbon footprint~\citep{strubell2019energy}. In this regard, MPTs have the potential to enable environmentally friendly artificial intelligence.

\bibliography{ml}
\bibliographystyle{iclr2021_conference}

\section*{Acknowledgements} 
The authors would like to thank Shreya Chaganti for her valuable contributions to the \textbf{biprop} open source code development and for her help on training MPT models for the final version of the paper.

This work was performed under the auspices of the U.S. Department of Energy by the Lawrence Livermore National Laboratory under Contract No. DE-AC52-07NA27344, Lawrence Livermore National Security, LLC. This document was prepared as an account of the work sponsored by an agency of the United States Government. Neither the United States Government nor Lawrence Livermore National Security, LLC, nor any of their employees makes any warranty, expressed or implied, or assumes any legal liability or responsibility for the accuracy, completeness, or usefulness of any information, apparatus, product, or process disclosed, or represents that its use would not infringe privately owned rights. Reference herein to any specific commercial product, process, or service by trade name, trademark, manufacturer, or otherwise does not necessarily constitute or imply its endorsement, recommendation, or favoring by the United States Government or Lawrence Livermore National Security, LLC. The views and opinions of the authors expressed herein do not necessarily state or reflect those of the United States Government or Lawrence Livermore National Security, LLC, and shall not be used for advertising or product endorsement purposes. This work was supported by LLNL Laboratory Directed Research and Development project 20-ER-014 and released with LLNL tracking number LLNL-CONF-815432.

\newpage
\appendix
\section{Hyperparameter Configurations}
\label{sec:hyperparameters}
\subsection{Hyperparameters for Section~\ref{sec:exp-mplth}}
\paragraph{Experimental Configuration.} 
For MPT-1/32 tickets, the network structure is not modified from the original. For MPT-1/1 tickets, the network structure is modified by moving the max-pooling layer directly after the convolution layer and adding a batch-normalization layer before the binary activation function, as is common in many BNN architectures \citep{rastegari2016xnornet}. We choose our baselines as dense full precision models with learned weights. The baselines were obtained by training backbone networks using the Adam optimizer with learning rate of $0.0003$ for $100$ epochs and with a batch size of $60$. In each randomly weighted backbone network, we find winning tickets MPT-1/32 and MPT-1/1 for different pruning rates using Algorithm~\ref{alg:quant-opt}. 
For both the weight-optimized and MPT networks, the weights are initialized using the Kaiming Normal distribution \citep{he2015delving}. All training routines make use of a cosine decay learning rate policy.

\begin{table*}[h] 
\centering
\begin{small}
\begin{tabular}{@{}lrrrrrrr@{}}\toprule
\textbf{Method} & \textbf{Model} & \textbf{Optimizer} & \textbf{LR} & \textbf{Momentum} & \textbf{Weight Decay} & \textbf{Batch} & \textbf{Epochs} \\ \midrule
MPT-1/32 & Conv2/4/6/8 & SGD & 0.1 & 0.9 & 1e-4 & 128 & 250 \\ \hdashline
MPT-1/1 & Conv2/4/6/8 & Adam & 0.1 & - & 1e-4 & 128 & 250 \\
\bottomrule
\end{tabular}
\end{small}
\caption{Hyperparameter Configurations for CIFAR-10 Experiments}
\label{table:cifar-10-hyperparams}
\end{table*} 

\subsection{Hyperparameters for Section~\ref{sec:exp-sota}}

In these experiments, the weights are initialized using the Kaiming Normal distribution \citep{he2015delving} for all the models except for MPT-1/32 on ImageNet where we use the Signed Constant initialization \citep{ramanujan2019whats} as it yielded slightly better performance. All training routines make use of a cosine decay learning rate policy. For ImageNet training we used a label smoothing value of 0.1 and a learning rate warmup length of 5 epochs. 

\begin{table*}[h] 
\centering
\begin{small}
\begin{tabular}{@{}lrrrrrrr@{}}\toprule
\textbf{Method} & \textbf{Model} & \textbf{Opt.} & \textbf{LR} & \textbf{Momentum} & \textbf{Weight Decay} & \textbf{Batch} & \textbf{Epochs} \\ \midrule
MPT-1/32 & ResNet-18 & SGD & 0.1 & 0.9 & 5e-4 & 256 & 250 \\ \hdashline
MPT-1/32 +BN & ResNet-18 & SGD & 0.1 & 0.9 & 5e-4 & 256 & 250 \\ \hdashline
MPT-1/1 & VGG-Small & Adam & 3.63e-3 & - & 17.335 & 128 & 600 \\ \hdashline
MPT-1/1 +BN & VGG-Small & Adam & 3.63e-3 & - & 1e-4 & 128 & 600 \\ 
\bottomrule
\end{tabular}
\end{small}
\caption{Hyperparameter Configurations for CIFAR-10 Experiments}
\label{table:cifar-10-hyperparams-2}
\end{table*} 

\begin{table*}[h] 
\centering
\begin{small}
\begin{tabular}{@{}lrrrrrrr@{}}\toprule
\textbf{Method} & \textbf{Model} & \textbf{Optimizer} & \textbf{LR} & \textbf{Momentum} & \textbf{Weight Decay} & \textbf{Batch} & \textbf{Epochs} \\ \midrule
MPT-1/32 & WRN-50 & SGD & 0.256 & 0.875 & 3.051757812e-5 & 256 & 120 \\ \hdashline
MPT-1/32 +BN & WRN-50 & SGD & 0.256 & 0.875 & {3.051757812e-5} & 256 & 120 \\ \hdashline
MPT-1/1 & WRN-34 & Adam & 2.56e-4 & - & 3.051757812e-5 & 256 & 250 \\ \hdashline
MPT-1/1 +BN & WRN-34 & Adam & 2.56e-4 & - & 3.051757812e-5 & 256 & 250 \\ 
\bottomrule
\end{tabular}
\end{small}
\caption{Hyperparameter Configurations for ImageNet Experiments}
\label{table:imagenet-hyperparams}
\end{table*}

\section{Existence of Binary-Weight Subnetwork Approximating Target Network} \label{sec:new-existence-bin-init}

In the following analysis, note that we write $Bin(\{-1,+1\}^{m \times n})$ to denote matrices of dimension $m \times n$ whose components are independently sampled from a binomial distribution with elements $\{-1,+1\}$ and probability $p=1/2$.

\begin{lemma} \label{lem:new-bin-init-two-layer}
Let $s \in [d]$, $\alpha \in \left[ -\frac{1}{\sqrt{s}}, \frac{1}{\sqrt{s}} \right]$, $i \in [d]$, and $\varepsilon, \delta \geq 0$ be given. Let $\bm{B} \in \{ -1, +1 \}^{k \times d}$ be chosen randomly from $Bin(\{ -1,1 \}^{k \times d})$ and $\bm{u} \in \{ -1,+1 \}^k$ be chosen randomly from $Bin(\{ -1,+1 \}^k)$. If \begin{align}
    k \geq \frac{16}{\varepsilon \sqrt{s}} + 16 \log \left( \frac{2}{\delta} \right), \label{hypothesis:new-bin-init-two-layer}
\end{align}
then with probability at least $1 - \delta$ there exist masks $\bm{\tilde{m}} \in \{ 0, 1 \}^{k}$ and $\bm{M} \in \{ 0, 1 \}^{k \times d}$ such that the function $g : \mathbb{R}^d \to \mathbb{R}$ defined by
\begin{align}
    g(\bm{x}) 
    &=  \left( \bm{\tilde{m}} \odot \bm{u} \right)^{\intercal} \sigma \left( \varepsilon (\bm{M} \odot \bm{B}) \bm{x} \right), \label{def:g-new-bin-init-two-layer}
\end{align}
satisfies
\begin{align}
    | g(\bm{x}) - \alpha x_i | 
    &\leq \varepsilon, \label{result:new-bin-init-two-layer}
\end{align}
for all $\| \bm{x} \|_{\infty} \leq 1$. Furthermore, $\| \bm{\tilde{m}} \|_0 = \| \bm{M} \|_0 \leq \frac{2}{\varepsilon \sqrt{s}}$, and $\max_{1 \leq j \leq k} \| \bm{M}_{j,:} \|_0 \leq 1$.
\end{lemma}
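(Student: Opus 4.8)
The plan is to approximate the scalar target map $\bm{x} \mapsto \alpha x_i$ by exploiting the elementary identity $\sigma(z) - \sigma(-z) = z$, which lets a \emph{pair} of ReLU units reproduce a linear function exactly. First I would restrict attention to masks $\bm{M}$ whose every row selects the single coordinate $i$, so that $\max_j \|\bm{M}_{j,:}\|_0 \le 1$ holds by construction. For a hidden unit $j$ retained by $\bm{\tilde m}$ whose row selects $i$, its contribution to $g$ is $u_j\,\sigma(\varepsilon B_{j,i} x_i) = \varepsilon\, u_j\, \sigma(B_{j,i} x_i)$ by positive homogeneity of $\sigma$. Classifying each unit by the pair $(u_j, B_{j,i}) \in \{-1,+1\}^2$, a unit of type $(+1,+1)$ contributes $\varepsilon\,\sigma(x_i)$ while one of type $(-1,-1)$ contributes $-\varepsilon\,\sigma(-x_i)$; adding one of each gives exactly $\varepsilon x_i$ by the ReLU identity. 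Symmetrically, the types $(+1,-1)$ and $(-1,+1)$ sum to $-\varepsilon x_i$.

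Next I would account for the magnitude of $\alpha$ using several such pairs. Setting $p := \lfloor |\alpha|/\varepsilon\rfloor$ and combining $p$ sign-appropriate pairs yields a function equal to $\sgn(\alpha)\,p\,\varepsilon\, x_i$, whose error against $\alpha x_i$ is $\big|\,|\alpha| - p\varepsilon\,\big|\cdot|x_i| < \varepsilon$, since $0 \le |\alpha| - p\varepsilon < \varepsilon$ and $|x_i| \le \|\bm{x}\|_\infty \le 1$. This construction uses exactly $2p$ hidden units, each with a single active incoming weight, so $\|\bm{\tilde m}\|_0 = \|\bm{M}\|_0 = 2p$; and because $|\alpha| \le 1/\sqrt{s}$ forces $p \le 1/(\varepsilon\sqrt{s})$, the sparsity bound $2p \le 2/(\varepsilon\sqrt{s})$ follows immediately.

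The remaining and main step is probabilistic: I must show the random draw of $\bm{B}$ and $\bm{u}$ supplies enough units of each required type. Taking $\alpha \ge 0$ (the negative case is identical), let $N_+$ and $N_-$ count units of type $(+1,+1)$ and $(-1,-1)$. Since $u_j$ and $B_{j,i}$ are independent fair signs, each of the four sign combinations occurs independently with probability $1/4$, so $N_\pm \sim \mathrm{Binomial}(k,1/4)$ with mean $\mu = k/4$, and the construction succeeds once $N_+ \ge p$ and $N_- \ge p$. I would bound each failure by the Chernoff lower-tail estimate $\Pr[N \le \mu - a] \le \exp\!\big(-a^2/(2\mu)\big)$ with $a = \mu - p$. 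The width hypothesis gives $\mu = k/4 \ge 4/(\varepsilon\sqrt{s}) + 4\log(2/\delta) \ge 4p + 4\log(2/\delta)$, whence $p \le \mu/4$, so $\mu - p \ge \tfrac34\mu$ and $(\mu-p)^2/(2\mu) \ge 9\mu/32 = 9k/128 \ge \log(2/\delta)$ once $k \ge 16\log(2/\delta)$. Each tail is therefore at most $\delta/2$, and a union bound over the two events yields success with probability at least $1-\delta$.

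The part requiring the most care is making the Chernoff estimate reproduce the stated constants: one must check that the slack between $p$ and the mean $\mu = k/4$ forced by the $16/(\varepsilon\sqrt{s})$ term, together with the $16\log(2/\delta)$ term, is just enough to drive both lower-tail probabilities below $\delta/2$. Everything else — the ReLU pairing identity, the rounding of $\alpha$ to a multiple of $\varepsilon$, and the sparsity bookkeeping — is deterministic and routine once the construction above is fixed.
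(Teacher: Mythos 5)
Your proposal is correct and takes essentially the same route as the paper's own proof: round $\alpha$ to an integer multiple $p\varepsilon$ of the gain (the paper's $c_i$), realize $\sgn(\alpha)\,p\,\varepsilon\,x_i$ exactly via the identity $\sigma(a)-\sigma(-a)=a$ using $p$ pairs of hidden units whose sign patterns $(u_j,B_{j,i})$ match, and bound the probability of a shortage of each pattern by a binomial tail plus a union bound, with the same sparsity bookkeeping. The only differences are cosmetic: the paper splits the $k$ rows into two halves and applies Hoeffding's inequality to each half, whereas you count both patterns over all $k$ rows (disjointness of $S_+$ and $S_-$ being automatic since the patterns are mutually exclusive) and use a multiplicative Chernoff lower-tail bound --- your constants check out, and your $p=\lfloor|\alpha|/\varepsilon\rfloor$ even absorbs the paper's separate $|\alpha|\leq\varepsilon$ case.
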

\begin{proof}
If $| \alpha | \leq \varepsilon$ then taking $\bm{M} = \bm{0}$ yields the desired result. Suppose that $| \alpha | > \varepsilon$. Then there exists a $c_i \in \mathbb{N}$ such that 
\begin{align}
    c_i \varepsilon\leq | \alpha | \leq (c_i + 1) \varepsilon \quad \text{and} \quad | c_i \varepsilon - | \alpha | | \leq \varepsilon. \label{eq:new-bin-init-two-layer.0}
\end{align}
Hence, it follows that
\begin{align}
    | c_i \varepsilon \sgn(\alpha) x_i - \alpha x_i |
    &= | x_i | | c_i \varepsilon - | \alpha | |
    \leq \varepsilon, \label{eq:new-bin-init-two-layer.0.1}
\end{align}
where the final inequality follows from (\ref{eq:new-bin-init-two-layer.0}) and the hypothesis that $\| \bm{x} \|_{\infty} \leq 1$. Our goal now is to show that with probability $1 - \delta$ the random initialization of $\bm{u}$ and $\bm{B}$ yield masks $\bm{\tilde{m}}$ and  $\bm{M}$ such that $g(\bm{x}) = c_i \varepsilon \sgn(\alpha) x_i$.

Now fix $i \in [d]$ and take $k' = \frac{k}{2}$. First, we consider the probability
\begin{align}
    P \left( | \{ j \in [k'] : u_j = +1 \ \text{and} \ B_{j,i} = \sgn(\alpha) \} | < c_i \right). \label{eq:new-bin-init-two-layer.1}
\end{align}
As $\bm{u}$ and $\bm{B}_{:,i}$ are each sampled from a binomial distribution with $k'$ trials, the distribution that the pair $(u_j, B_{j,i})$ is sampled from is a multinomial distribution with four possible events each having a probability of $1/4$. Since we are only interested in the event $(u_j, B_{j,i}) = (+1, \sgn(\alpha))$ occurring, we can instead consider a binomial distribution where $P((u_j, B_{j,i}) = (+1, \sgn(\alpha)) = \frac{1}{4}$ and $P((u_j, B_{j,i}) \neq (+1, \sgn(\alpha)) = \frac{3}{4}$. Hence, using Hoeffding's inequality we have that
\begin{align}
    P \left( | \{ j \in [k'] : u_j = +1 \ \text{and} \ B_{j,i} = \sgn(\alpha) \} | < c_i \right)
    &\leq \exp \left( -2k' \left( \frac{1}{4} - \frac{c_i}{k'} \right)^2 \right) \\
    &= \exp \left( -\frac{1}{8} k' + c_i - 2 \frac{c_i^2}{k'} \right) \\
    &< \exp \left( -\frac{1}{8} k' + 2 c_i \right), \label{eq:new-bin-init-two-layer.2}
\end{align}
where the final inequality follows since $\exp()$ is an increasing function and $-2 \frac{c_i^2}{k'} < 0$.
From (\ref{eq:new-bin-init-two-layer.0}) and the fact that $| \alpha | \leq \frac{1}{\sqrt{s}}$, it follows that
\begin{align}
    c_i \leq \frac{1}{\varepsilon \sqrt{s}}. \label{eq:new-bin-init-two-layer.3}
\end{align}
Combining our hypothesis in (\ref{hypothesis:new-bin-init-two-layer}) with (\ref{eq:new-bin-init-two-layer.3}) yields that
\begin{align}
    -\frac{1}{8} k' + c_i 
    &= -\frac{1}{16} k + c_i 
    \leq -\frac{1}{16} \left( \frac{16}{\varepsilon \sqrt{s}} + 16 \log \left( \frac{2}{\delta} \right) \right) + \frac{1}{\varepsilon \sqrt{s}}
    = \log \left( \frac{\delta}{2} \right). \label{eq:new-bin-init-two-layer.4}
\end{align}
Substituting (\ref{eq:new-bin-init-two-layer.4}) into (\ref{eq:new-bin-init-two-layer.2}) yields
\begin{align}
    P \left( | \{ j \in [k'] : u_j = +1 \ \text{and} \ B_{j,i} = \sgn(\alpha) \} | < c_i \right)
    &< \frac{\delta}{2}. \label{eq:new-bin-init-two-layer.5}
\end{align}

Additionally, it follows from the same argument that
\begin{align}
    P \left( | \{ k' < j \leq k : u_j = -1 \ \text{and} \ B_{j,i} = -\sgn(\alpha) \} | < c_i \right)
    &< \frac{\delta}{2}. \label{eq:new-bin-init-two-layer.7}
\end{align}
From (\ref{eq:new-bin-init-two-layer.5}) and (\ref{eq:new-bin-init-two-layer.7}) it follows with probability at least $1 - \delta$ that there exist sets $S_+ := \{ j : u_j = +1 \ \text{and} \ B_{j,i} = \sgn(\alpha) \}$ and $S_- := \{ j : u_j = -1 \ \text{and} \ B_{j,i} = -\sgn(\alpha) \}$ satisfying $|S_+| = |S_-| = c_i$ and $S_+ \cap S_- = \emptyset$. Using these sets, we define the components of the mask $\bm{\tilde{m}}$ and $\bm{M}$ by
\begin{align}
\tilde{m}_j = \left\{
   \begin{array}{lcl}
      1 &:& j \in S_+ \cup S_- \\
      0 &:& \text{otherwise}
   \end{array}
   \right. \label{eq:new-bin-init-two-layer.8}
\end{align}
and
\begin{align}
M_{j,\ell} = \left\{
   \begin{array}{lcl}
      1 &:& j \in S_+ \cup S_- \ \text{and} \ \ell = i \\
      0 &:& \text{otherwise}
   \end{array}
   \right. . \label{eq:new-bin-init-two-layer.8.1}
\end{align}
Using the definition of $g(\bm{x})$ in (\ref{def:g-new-bin-init-two-layer}) we now have that
\begin{align}
    g(\bm{x}) 
    &= \sum_{i \in S_+} \sigma \left( \varepsilon \sgn(\alpha) x_i \right) - \sum_{i \in S_-} \sigma \left( - \varepsilon \sgn(\alpha) x_i \right) \\
    &= c_i \sigma \left( \varepsilon \sgn(\alpha) x_i \right) - c_i \sigma \left( - \varepsilon \sgn(\alpha) x_i \right) \\
    &= c_i \varepsilon \sgn(\alpha) x_i, \label{eq:new-bin-init-two-layer.9}
\end{align}
where the final equality follows from the identity $\sigma(a) - \sigma(-a) = a$, for all $a \in \mathbb{R}$. This concludes the proof of (\ref{result:new-bin-init-two-layer}).

Lastly, by our choice of $\bm{\tilde{m}}$ in (\ref{eq:new-bin-init-two-layer.8}), $\bm{M}$ in (\ref{eq:new-bin-init-two-layer.8.1}), and (\ref{eq:new-bin-init-two-layer.3}), it follows that 
\begin{align}
    \| \bm{\tilde{m}} \|_0 
    &= \| \bm{M} \|_0 
    = 2 c_i
    \leq \frac{2}{\varepsilon \sqrt{s}},
\end{align}
and 
\begin{align}
    \max_{1 \leq j \leq k} \| \bm{M}_{j,:} \|_0 
    &\leq 1,
\end{align}
which concludes the proof.
\end{proof}

The next step is to consider an analogue for Lemma A.2 from \citep{malach2020proving} which we provide in Lemma~\ref{lem:bin-init-two-layer-2}.

\begin{lemma} \label{lem:bin-init-two-layer-2}
Let $s \in [d]$, $\bm{w}^* \in \left[ -\frac{1}{\sqrt{s}}, \frac{1}{\sqrt{s}} \right]^d$ with $\| \bm{w}^* \|_0 \leq s$, and $\varepsilon, \delta > 0$ be given. Let $\bm{B} \in \{ -1, +1 \}^{k \times d}$ be chosen randomly from $Bin(\{ -1,1 \}^{k \times d})$ and $\bm{u} \in \{ -1,+1 \}^k$ be chosen randomly from $Bin(\{ -1,+1 \}^k)$. If
\begin{align}
    k \geq s \cdot \left\lceil \frac{16 \sqrt{s}}{\varepsilon} + 16 \log \left( \frac{2s}{\delta} \right) \right\rceil, \label{hypothesis:bin-init-two-layer-2}
\end{align} 
then with probability at least $1 - \delta$ there exist masks $\bm{\tilde{m}} \in \{ 0, 1 \}^{k}$ and $\bm{M} \in \{ 0, 1 \}^{k \times d}$ such that the function $g : \mathbb{R}^d \to \mathbb{R}$ defined by
\begin{align}
    g(\bm{x}) 
    &=  \left( \bm{\tilde{m}} \odot \bm{u} \right)^{\intercal} \sigma \left( \varepsilon (\bm{M} \odot \bm{B}) \bm{x} \right), \label{def:g-bin-init-two-layer-2}
\end{align}
satisfies
\begin{align}
    | g(\bm{x}) - \langle \bm{w}^*, \bm{x} \rangle | \leq \varepsilon, \ \text{for all} \ \| \bm{x} \|_{\infty} \leq 1. \label{result:bin-init-two-layer-2}
\end{align}
Furthermore, $\| \bm{\tilde{m}} \|_0 = \| \bm{M} \|_0 \leq \frac{2 s \sqrt{s}}{\varepsilon}$ and $\max_{1 \leq j \leq k} \| \bm{M}_{j,:} \|_0 \leq 1$.
\end{lemma}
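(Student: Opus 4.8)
The plan is to lift the single-coordinate approximation of Lemma~\ref{lem:new-bin-init-two-layer} to the sparse inner product $\langle \bm{w}^*, \bm{x} \rangle$ by handling each nonzero coordinate of $\bm{w}^*$ on a disjoint block of hidden neurons. Since $\| \bm{w}^* \|_0 \leq s$, I would write $\langle \bm{w}^*, \bm{x} \rangle = \sum_{t=1}^{s'} w_{i_t}^* x_{i_t}$, where $i_1, \dots, i_{s'}$ with $s' \leq s$ enumerate the support of $\bm{w}^*$. Each summand $w_{i_t}^* x_{i_t}$ is exactly of the form $\alpha x_i$ treated by Lemma~\ref{lem:new-bin-init-two-layer}, with gain $\alpha = w_{i_t}^* \in \left[ -\frac{1}{\sqrt{s}}, \frac{1}{\sqrt{s}} \right]$ by hypothesis, so the earlier lemma applies verbatim to each term.

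Concretely, I would partition the row index set $[k]$ into $s$ disjoint blocks $J_1, \dots, J_s$, each of size $\lceil \frac{16 \sqrt{s}}{\varepsilon} + 16 \log(\frac{2s}{\delta}) \rceil$, which is feasible precisely by the hypothesis (\ref{hypothesis:bin-init-two-layer-2}) on $k$. The restrictions of $\bm{B}$ and $\bm{u}$ to the rows in any block $J_t$ are themselves fresh $Bin$-distributed samples, so I can invoke Lemma~\ref{lem:new-bin-init-two-layer} on block $J_t$ with the rescaled parameters $\varepsilon' = \varepsilon/s$ and $\delta' = \delta/s$. A direct substitution shows the per-block width requirement $\frac{16}{\varepsilon' \sqrt{s}} + 16 \log(\frac{2}{\delta'})$ collapses to $\frac{16 \sqrt{s}}{\varepsilon} + 16 \log(\frac{2s}{\delta})$, i.e. exactly the block size. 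Hence each block produces, with probability at least $1 - \delta/s$, masks $\bm{\tilde{m}}^{(t)}$ and $\bm{M}^{(t)}$ supported on the rows $J_t$ (and on column $i_t$) such that the associated function $g_t$ satisfies $| g_t(\bm{x}) - w_{i_t}^* x_{i_t} | \leq \varepsilon/s$, with $\| \bm{M}^{(t)} \|_0 \leq \frac{2}{\varepsilon' \sqrt{s}} = \frac{2\sqrt{s}}{\varepsilon}$ and each row of $\bm{M}^{(t)}$ having at most one nonzero.

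To combine, I would set $\bm{\tilde{m}} = \sum_t \bm{\tilde{m}}^{(t)}$ and $\bm{M} = \sum_t \bm{M}^{(t)}$. Because the blocks occupy disjoint rows, these sums involve no cancellation, and since the ReLU $\sigma$ acts row-wise the global function decouples as $g(\bm{x}) = \sum_t g_t(\bm{x})$. A union bound over the $s$ blocks gives an overall success probability of at least $1 - s \cdot \frac{\delta}{s} = 1 - \delta$. On that event, the triangle inequality yields $| g(\bm{x}) - \langle \bm{w}^*, \bm{x} \rangle | \leq \sum_t |g_t(\bm{x}) - w_{i_t}^* x_{i_t}| \leq s \cdot \frac{\varepsilon}{s} = \varepsilon$, establishing (\ref{result:bin-init-two-layer-2}); summing the per-block mask norms gives $\| \bm{\tilde{m}} \|_0 = \| \bm{M} \|_0 \leq s \cdot \frac{2\sqrt{s}}{\varepsilon} = \frac{2s\sqrt{s}}{\varepsilon}$, and the row-sparsity bound $\max_j \| \bm{M}_{j,:} \|_0 \leq 1$ is preserved because distinct blocks use distinct rows.

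The main obstacle is not any new estimate but the bookkeeping of the decomposition: I must verify that the disjoint row supports genuinely make $\sigma(\varepsilon (\bm{M} \odot \bm{B}) \bm{x})$ and the output masking by $\bm{\tilde{m}} \odot \bm{u}$ decouple into $s$ independent instances of Lemma~\ref{lem:new-bin-init-two-layer}, and that the rescaling $\varepsilon \to \varepsilon/s$, $\delta \to \delta/s$ makes the width, error, and sparsity bounds line up exactly with the stated constants. Once this partition-and-rescale structure is in place, the rest is a union bound and a triangle inequality.
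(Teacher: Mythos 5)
Your proposal is correct and follows essentially the same route as the paper's proof: the paper likewise partitions the $k$ rows into $s$ blocks of width $k' = k/s$, applies Lemma~\ref{lem:new-bin-init-two-layer} to each block with the rescaled parameters $\varepsilon' = \varepsilon/s$ and $\delta' = \delta/s$, and concludes via a union bound and the triangle inequality, with the same per-block sparsity accounting. The only cosmetic difference is that the paper assumes WLOG that the support of $\bm{w}^*$ lies in $[s]$ rather than enumerating it as $i_1, \dots, i_{s'}$.
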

\begin{proof}
Assume $k = s \cdot \left\lceil \frac{16 \sqrt{s}}{\varepsilon} + 16 \log \left( \frac{2s}{\delta} \right) \right\rceil$ and set $k' = \frac{k}{s}$. Note that if $k > s \cdot \left\lceil \frac{16 \sqrt{s}}{\varepsilon} + 16 \log \left( \frac{2s}{\delta} \right) \right\rceil$ then the excess neurons can be masked yielding the desired value for $k$. We decompose $\bm{u}$, $\bm{\tilde{m}}$, $\bm{B}$, and $\bm{M}$ into $s$ equal size submatrices by defining
\begin{align}
    \bm{u}^{(i)} &:= \begin{bmatrix}
    u_{k'(i-1)+1} & \cdots & u_{k'i}
    \end{bmatrix}^{\intercal} \in \{-1,+1\}^{k' \times 1} \label{eq:bin-init-two-layer-2.0} \\
    \bm{\tilde{m}}^{(i)} &:= \begin{bmatrix}
    \tilde{m}_{k'(i-1)+1} & \cdots & \tilde{m}_{k'i}
    \end{bmatrix}^{\intercal} \in \{0,1\}^{k' \times 1} \label{eq:bin-init-two-layer-2.1} \\
    \bm{B}^{(i)} &:= \begin{bmatrix}
    b_{(k'(i-1)+1),1} & \cdots & b_{(k'(i-1)+1),d} \\
    \vdots & \ddots & \vdots \\
    b_{k'i,1} & \cdots & b_{k'i,d}
    \end{bmatrix} \in \{-1,+1\}^{k' \times d} \label{eq:bin-init-two-layer-2.2} \\
    \bm{M}^{(i)} &:= \begin{bmatrix}
    m_{(k'(i-1)+1),1} & \cdots & m_{(k'(i-1)+1),d} \\
    \vdots & \ddots & \vdots \\
    m_{k'i,1} & \cdots & m_{k'i,d}
    \end{bmatrix} \in \{0,1\}^{k' \times d}, \label{eq:bin-init-two-layer-2.3}
\end{align}
for $i \in [s]$. Note that these submatrices satisfy 
\begin{align}
    \bm{u} = \begin{bmatrix}
    \bm{u}^{(1)} \\ 
    \vdots \\
    \bm{u}^{(s)}
    \end{bmatrix}, \ 
    \bm{\tilde{m}} = \begin{bmatrix}
    \bm{\tilde{m}}^{(1)} \\ 
    \vdots \\
    \bm{\tilde{m}}^{(s)}
    \end{bmatrix}, \ 
    \bm{B} = \begin{bmatrix}
    \bm{B}^{(1)} \\
    \vdots \\
    \bm{B}^{(s)}
    \end{bmatrix}, \ 
    \bm{M} = \begin{bmatrix}
    \bm{M}^{(1)} \\
    \vdots \\
    \bm{M}^{(s)}
    \end{bmatrix}. \label{eq:bin-init-two-layer-2.4}
\end{align}

Now let $\mathcal{I} := \{ i \in [d] : w_i^* \neq 0 \}$. By our hypothesis that $\| \bm{w}^* \|_0 \leq s$, it follows that $| \mathcal{I} | \leq s$. WLOG, assume that $\mathcal{I} \subseteq [s]$. Now fix $i \in [s]$ and define $g_i : \mathbb{R}^d \to \mathbb{R}$ by
\begin{align}
    g_i (\bm{x}) 
    &:= \left( \bm{\tilde{m}}^{(i)} \odot \bm{u}^{(i)} \right)^{\intercal} \sigma \left( \varepsilon (\bm{M}^{(i)} \odot \bm{B}^{(i)}) \bm{x} \right) \label{def:bin-init-two-layer-2.7}
\end{align}
By (\ref{hypothesis:bin-init-two-layer-2}), taking $\varepsilon' = \frac{\varepsilon}{s}$ and $\delta' = \frac{\delta}{s}$ yields that $k' \geq \frac{16}{\varepsilon' \sqrt{s}} + 16 \log \left( \frac{2}{\delta'} \right)$. Hence, it follows from Lemma~\ref{lem:new-bin-init-two-layer} that with probability at least $1 - \delta'$ there exist $\bm{\tilde{m}}^{(i)} \in \{ 0, 1 \}^{k'}$ and $\bm{M}^{(i)} \in \{ 0, 1 \}^{k' \times d}$ such that 
\begin{align}
    | g_i (\bm{x}) - w_i^* x_i | \leq \varepsilon' = \frac{\varepsilon}{s}, \label{eq:bin-init-two-layer-2.8}
\end{align}
for every $\bm{x} \in \mathbb{R}^d$ with $\| \bm{x} \|_{\infty} \leq 1$, and
\begin{align}
    \| \bm{\tilde{m}}^{(i)} \|_0
    &= \| \bm{M}^{(i)} \|_0 
    \leq \frac{2}{\varepsilon' \sqrt{s}} = \frac{2 \sqrt{s}}{\varepsilon} 
    \quad \text{and} \quad 
    \max_{k'(i-1) + 1 \leq j \leq k'i} \| M_{j,i}^{(i)} \|_0 \leq 1. \label{eq:bin-init-two-layer-2.9}
\end{align}

By the definition of $g(\bm{x})$ in (\ref{def:g-bin-init-two-layer-2}), using (\ref{eq:bin-init-two-layer-2.4}) yields
\begin{align}
    g (\bm{x})
    &= \left( \bm{\tilde{m}} \odot \bm{u} \right)^{\intercal} \sigma \left( \varepsilon (\bm{M} \odot \bm{B}) \bm{x} \right)
    = \sum_{i=1}^s \left( \bm{\tilde{m}}^{(i)} \odot \bm{u}^{(i)} \right)^{\intercal} \sigma \left( \varepsilon (\bm{M}^{(i)} \odot \bm{B}^{(i)}) \bm{x} \right) 
    = \sum_{i=1}^s g_i (\bm{x}). \label{eq:bin-init-two-layer-2.10}
\end{align}
Hence, combining (\ref{eq:bin-init-two-layer-2.8}) for all $i \in [s]$, it follows that with probability at least $1 - \delta$ we have
\begin{align}
    | g(\bm{x}) - \langle \bm{w}^*, \bm{x} \rangle | 
    &= \left| \sum_{i = 1}^s g_i(\bm{x}) - \sum_{i=1}^s w_i^* x_i \right|
    \leq \sum_{i=1}^s | g_i(\bm{x}) - w_i^* x_i | 
    \leq \varepsilon. \label{eq:bin-init-two-layer-2.11}
\end{align}
Finally, it follows from (\ref{eq:bin-init-two-layer-2.4}) and (\ref{eq:bin-init-two-layer-2.9}) that
\begin{align}
    \| \bm{\tilde{m}} \|_0
    &= \| \bm{M} \|_0 
    \leq \frac{2 s \sqrt{s}}{\varepsilon} 
    \quad \text{and} \quad 
    \max_{1 \leq j \leq k} \| \bm{M}_{j,:} \|_0 \leq 1, \label{eq:bin-init-two-layer-2.12}
\end{align}
which concludes the proof.
\end{proof}

We now state and prove an analogue to Lemma A.5 in \citep{malach2020proving} which is the last lemma we will need to establish the desired result.

\begin{lemma} \label{lem:bin-init-two-layer-3}
Let $s \in [d]$, $\bm{W}^* \in \left[ -\frac{1}{\sqrt{s}}, \frac{1}{\sqrt{s}} \right]^{n \times d}$ with $\| \bm{W}^* \|_0 \leq s$, $F : \mathbb{R}^d \to \mathbb{R}^n$ defined by $F_i(\bm{x}) = \sigma( \langle \bm{w}_i^*, \bm{x} \rangle )$, and $\varepsilon, \delta > 0$ be given. Let $\bm{B} \in \{ -1, +1 \}^{k \times d}$ be chosen randomly from $Bin(\{ -1,1 \}^{k \times d})$ and $\bm{U} \in \{ -1,+1 \}^{k \times n}$ be chosen randomly from $Bin(\{ -1,+1 \}^{k \times n})$. If 
\begin{align}
    k \geq ns \cdot \left\lceil \frac{16 \sqrt{ns}}{\varepsilon} + 16 \log \left( \frac{2ns}{\delta} \right) \right\rceil, \label{hypothesis:bin-init-two-layer-3}
\end{align}
then with probability at least $1 - \delta$ there exist masks $\bm{\tilde{M}} \in \{ 0, 1 \}^{k \times n}$ and $\bm{M} \in \{ 0, 1 \}^{k \times d}$ such that the function $G : \mathbb{R}^d \to \mathbb{R}^n$ defined by
\begin{align}
    G(\bm{x}) 
    &= \sigma \left( (\bm{\tilde{M}} \odot \bm{U})^{\intercal} \sigma \left( \varepsilon (\bm{M} \odot \bm{B}) \bm{x} \right) \right), \label{def:G-bin-init-two-layer-3}
\end{align}
satisfies
\begin{align}
    \| G(\bm{x}) - F(\bm{x}) \|_2 \leq \varepsilon, \ \text{for all} \ \| \bm{x} \|_{\infty} \leq 1. \label{result:bin-init-two-layer-3}
\end{align}
Furthermore, $\| \bm{\tilde{M}} \|_0 = \| \bm{M} \|_0 \leq \frac{2 ns \sqrt{ns}}{\varepsilon}$.
\end{lemma}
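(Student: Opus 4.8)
The plan is to reduce this multi-output, ReLU-activated statement to $n$ essentially independent applications of Lemma~\ref{lem:bin-init-two-layer-2}, one per coordinate $F_i(\bm{x}) = \sigma(\langle \bm{w}_i^*, \bm{x} \rangle)$. First I would assume $k = ns \lceil 16\sqrt{ns}/\varepsilon + 16\log(2ns/\delta) \rceil$ (the excess neurons are masked otherwise) and partition the $k$ hidden neurons into $n$ disjoint blocks of size $k' = k/n$, dedicating block $i$ to the $i$-th output. Concretely, block $i$ consists of rows $k'(i-1)+1, \dots, k'i$ of $\bm{B}$ together with the corresponding entries of the $i$-th column of $\bm{U}$; these play the roles of $\bm{B}$ and $\bm{u}$ in Lemma~\ref{lem:bin-init-two-layer-2}. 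I would impose a block-diagonal structure on the outer mask, setting $\tilde{M}_{j,i}$ to the value supplied by Lemma~\ref{lem:bin-init-two-layer-2} when $j$ lies in block $i$ and to $0$ otherwise, so that output $i$ reads only from its own block and the blocks never interfere; the rows of $\bm{M}$ in block $i$ are chosen by the Lemma~\ref{lem:bin-init-two-layer-2} construction for $\bm{w}_i^*$.

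The core of the argument is the parameter choice $\varepsilon' = \varepsilon/\sqrt{n}$ and $\delta' = \delta/n$. With these, the per-block width requirement of Lemma~\ref{lem:bin-init-two-layer-2}, namely $k' \geq s \lceil 16\sqrt{s}/\varepsilon' + 16\log(2s/\delta') \rceil$, becomes exactly $s \lceil 16\sqrt{ns}/\varepsilon + 16\log(2ns/\delta) \rceil$, which equals $k' = k/n$ and is therefore satisfied. Since $\| \bm{W}^* \|_0 \leq s$ forces $\| \bm{w}_i^* \|_0 \leq s$ for every $i$ and each row lies in $[-1/\sqrt{s}, 1/\sqrt{s}]^d$, Lemma~\ref{lem:bin-init-two-layer-2} applies to each block and yields, with probability at least $1 - \delta'$, masks for which the block-$i$ contribution $g_i(\bm{x})$ satisfies $|g_i(\bm{x}) - \langle \bm{w}_i^*, \bm{x} \rangle| \leq \varepsilon'$ for all $\| \bm{x} \|_\infty \leq 1$, with $\| \tilde{m}^{(i)} \|_0 = \| \bm{M}^{(i)} \|_0 \leq 2 s \sqrt{s}/\varepsilon'$. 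Because the blocks use disjoint rows of $\bm{B}$ and disjoint entries of $\bm{U}$, these events are independent, so a union bound over the $n$ blocks gives overall success probability at least $1 - n \delta' = 1 - \delta$.

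It then remains to assemble the coordinates and control the $\ell_2$ error. By the block-diagonal construction the $i$-th output of $G$ is exactly $G_i(\bm{x}) = \sigma(g_i(\bm{x}))$, so using that $\sigma$ is $1$-Lipschitz I obtain $|G_i(\bm{x}) - F_i(\bm{x})| = |\sigma(g_i(\bm{x})) - \sigma(\langle \bm{w}_i^*, \bm{x} \rangle)| \leq \varepsilon'$. Aggregating coordinatewise in $\ell_2$ gives $\| G(\bm{x}) - F(\bm{x}) \|_2 \leq \sqrt{n}\, \varepsilon' = \varepsilon$, which is (\ref{result:bin-init-two-layer-3}). For the sparsity bound, each block contributes $\| \tilde{m}^{(i)} \|_0 = \| \bm{M}^{(i)} \|_0 \leq 2 s \sqrt{s}/\varepsilon'$, and since the blocks occupy disjoint rows (and the outer mask is block-diagonal) these add, giving $\| \tilde{M} \|_0 = \| \bm{M} \|_0 \leq n \cdot 2 s \sqrt{s}/\varepsilon' = 2 n s \sqrt{ns}/\varepsilon$ after substituting $\varepsilon' = \varepsilon/\sqrt{n}$.

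I expect the main obstacle to be bookkeeping rather than any deep idea: getting the block-diagonal mask structure right so that distinct outputs never share hidden neurons, and propagating the two accuracy parameters through the ceiling function so that the per-block width requirement reproduces (\ref{hypothesis:bin-init-two-layer-3}) with the stated constants. The $1$-Lipschitz step that inserts the outer ReLU (absent from Lemma~\ref{lem:bin-init-two-layer-2}) and the conversion from coordinatewise error to the $\ell_2$ norm (the source of the $\sqrt{n}$ factor, which is what upgrades $\varepsilon/s$-type splitting to $\varepsilon/\sqrt{n}$) are the only genuinely new ingredients.
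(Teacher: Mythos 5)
Your proposal is correct and matches the paper's own proof essentially step for step: the same partition of the $k$ neurons into $n$ blocks of width $k' = k/n$, the same block-diagonal choice of the outer mask $\bm{\tilde{M}}$, the same parameter splitting $\varepsilon' = \varepsilon/\sqrt{n}$, $\delta' = \delta/n$ feeding Lemma~\ref{lem:bin-init-two-layer-2} per block, the same $1$-Lipschitz ReLU step and coordinatewise-to-$\ell_2$ aggregation, and the same additive sparsity count yielding $2ns\sqrt{ns}/\varepsilon$. Your added remark about independence of the blocks is harmless but unnecessary, since the union bound you (and the paper) invoke does not require it.
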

\begin{proof}
Assume $k = ns \cdot \left\lceil \frac{16 \sqrt{ns}}{\varepsilon} + 16 \log \left( \frac{2ns}{\delta} \right) \right\rceil$ and set $k' = \frac{k}{n}$. Note that if $k > ns \cdot \left\lceil \frac{16 \sqrt{ns}}{\varepsilon} + 16 \log \left( \frac{2ns}{\delta} \right) \right\rceil$ then excess neurons can be masked to yield the desired value for $k$. As in the proof of Lemma~\ref{lem:bin-init-two-layer-2}, we can split $\bm{U}$, $\bm{\tilde{M}}$, $\bm{B}$, and $\bm{M}$ into $n$ submatrices, denoted $\bm{U}^{(i)} \in \{-1,+1\}^{k' \times n}$, $\bm{\tilde{M}}^{(i)} \in \{-1,+1\}^{k' \times n}$, $\bm{B}^{(i)} \in \{-1,+1\}^{k' \times d}$, and $\bm{M}^{(i)} \in \{-1,+1\}^{k' \times d}$ for $i \in [n]$, such that
\begin{align}
    \bm{U} = \begin{bmatrix}
    \bm{U}^{(1)} \\
    \vdots \\ 
    \bm{U}^{(n)}
    \end{bmatrix}, \ 
    \bm{\tilde{M}} = \begin{bmatrix}
    \bm{\tilde{M}}^{(1)} \\
    \vdots \\
    \bm{\tilde{M}}^{(n)}
    \end{bmatrix}, \
    \bm{B} = \begin{bmatrix}
    \bm{B}^{(1)} \\
    \vdots \\
    \bm{B}^{(n)}
    \end{bmatrix}, \ \text{and} \  
    \bm{M} = \begin{bmatrix}
    \bm{M}^{(1)} \\
    \vdots \\
    \bm{M}^{(n)}
    \end{bmatrix}. \label{eq:bin-init-two-layer-3.1}
\end{align}

To simplify notation in the following definition, we define the vectors $\bm{\tilde{m}}^{(i)} := \bm{\tilde{M}}_{:,i}^{(i)}$ and $\bm{\tilde{u}}^{(i)} := \bm{\tilde{U}}_{:,i}^{(i)}$. Now we define the functions $g_i: \mathbb{R}^d \to \mathbb{R}$ by
\begin{align}
    g_i(\bm{x}) 
    &= \left( \bm{\tilde{m}}^{(i)} \odot \bm{u}^{(i)} \right)^{\intercal} \sigma \left( \beta (\bm{M}^{(i)} \odot \bm{B}^{(i)}) \bm{x} \right), \label{eq:bin-init-two-layer-3.2}
\end{align}
for each $i \in [n]$. Taking $\varepsilon' = \frac{\varepsilon}{\sqrt{n}}$ and $\delta' = \frac{\delta}{n}$, it follows from (\ref{hypothesis:bin-init-two-layer-3}) that $k' \geq s \cdot \left\lceil \frac{16 \sqrt{s}}{\varepsilon'} + 16 \log \left( \frac{2s}{\delta'} \right) \right\rceil$. As the hypotheses of Lemma~\ref{lem:bin-init-two-layer-2} are satisfied, with probability at least $1 - \frac{\delta}{n}$ there exist masks $\bm{\tilde{m}}^{(i)}$ and $\bm{M}^{(i)}$ with 
\begin{align}
\| \bm{\tilde{m}}^{(i)} \|_0 
&= \| \bm{M}^{(i)} \|_0 
\leq \frac{2s \sqrt{s}}{\varepsilon'} 
= \frac{2s \sqrt{ns}}{\varepsilon}  \label{eq:bin-init-two-layer-3.2.1}
\end{align}
such that
\begin{align}
    | g_i(\bm{x}) - \langle \bm{W}_i^*, \bm{x} \rangle | 
    &\leq \frac{\varepsilon}{\sqrt{n}}, \ \text{for all} \ \| \bm{x} \|_{\infty} \leq 1. \label{eq:bin-init-two-layer-3.3}
\end{align}
For each $i \in [n]$, note that this results in choosing the columns of the mask $\bm{\tilde{M}}^{(i)}$ by 
\begin{align}
\bm{\tilde{M}}_{:,\ell}^{(i)} = \left\{
   \begin{array}{lcl}
      \bm{\tilde{m}}^{(i)} &:& \ell = i \\
      \bm{0} &:& \text{otherwise}
   \end{array}
   \right. \label{eq:bin-init-two-layer-3.3.1}
\end{align}
Combining this choice with (\ref{eq:bin-init-two-layer-3.1}) yields
\begin{align}
    (\bm{\tilde{M}} \odot \bm{U})^{\intercal} \sigma \left( \beta (\bm{M} \odot \bm{B}) \bm{x} \right)
    &= \begin{bmatrix}
    g_1 (\bm{x}) \\
    \vdots \\
    g_n (\bm{x})
    \end{bmatrix}. \label{eq:bin-init-two-layer-3.4}
\end{align}
By the definition of $G(\bm{x})$ in (\ref{def:G-bin-init-two-layer-3}), it follows from (\ref{eq:bin-init-two-layer-3.4}) that
\begin{align}
    G(\bm{x}) = \begin{bmatrix}
    \sigma(g_1(\bm{x})) \\
    \vdots \\
    \sigma(g_n(\bm{x}))
    \end{bmatrix}. \label{eq:bin-init-two-layer-3.5}
\end{align}
Combining (\ref{eq:bin-init-two-layer-3.3}) and (\ref{eq:bin-init-two-layer-3.5}), we have with probability at least $1 - \delta$ that
\begin{align}
    \| G(\bm{x}) - F(\bm{x}) \|_2^2
    &= \sum_{i=1}^n \left( \sigma(g_i(\bm{x})) - \sigma( \langle \bm{w}_i^*, \bm{x} \rangle) \right)^2
    \leq \sum_{i=1}^n \left( g_i(\bm{x}) - \langle \bm{W}_i^*, \bm{x} \rangle \right)^2
    \leq \varepsilon^2. \label{eq:bin-init-two-layer-3.6}
\end{align}
Finally, it follows from (\ref{eq:bin-init-two-layer-3.2.1}) and (\ref{eq:bin-init-two-layer-3.3.1}) that
\begin{align}
    \| \bm{\tilde{M}} \|_0 
    &= \| \bm{M} \|_0 
    \leq \frac{2 ns \sqrt{ns}}{\varepsilon}  \label{eq:bin-init-two-layer-3.7}
\end{align} 
which concludes the proof.
\end{proof}

We are now ready to prove the main result in Theorem~\ref{thm:bin-subnetwork}.

\begin{theorem} \label{thm:bin-subnetwork}
Let $\ell, n, s \in \mathbb{N}$, $\bm{W}^{(1)*} \in \left[ -\frac{1}{\sqrt{s}}, \frac{1}{\sqrt{s}} \right]^{d \times n}$, $\{ \bm{W}^{(i)*} \}_{i=2}^{\ell - 1} \in \left[ -\frac{1}{\sqrt{n}}, \frac{1}{\sqrt{n}} \right]^{n \times n}$, and $\bm{W}^{(\ell)*} \in \left[ -\frac{1}{\sqrt{n}}, \frac{1}{\sqrt{n}} \right]^{1 \times n}$. Assume that for each $i \in [\ell]$ we have $\| \bm{W}^{(i)*} \|_2 \leq 1$ and $\max_{j} \| \bm{W}_j^{(i)*} \|_0 \leq s$. Define $F(x) := F^{(\ell)} \circ \cdots \circ F^{(1)} (\bm{x})$ where $F^{(i)}(\bm{x}) = \sigma( \bm{W}^{(i)*} \bm{x})$ for $i \in [\ell-1]$ and $F^{(\ell)}(\bm{x}) = \bm{W}^{(\ell)*} \bm{x}$. Fix $\varepsilon, \delta \in (0,1)$. 

Let $\bm{B}^{(1)} \in \{-1,+1\}^{k \times d}$ be sampled from $Bin(\{-1,+1\}^{k \times d})$, $\{ \bm{B}^{(i)} \}_{i=2}^{\ell} \in \{-1,+1\}^{k \times n}$ be sampled from $Bin(\{-1,+1\}^{k \times n})$, $\{ \bm{U}^{(i)} \}_{i=1}^{\ell-1} \in \{-1,+1\}^{k \times n}$ be sampled from $Bin(\{-1,+1\}^{k \times n})$ and $\bm{U}^{(\ell)} \in \{-1,+1\}^{k \times 1}$ sampled from $Bin(\{-1,+1\}^{k \times 1})$. If
\begin{align}
    k \geq ns \cdot \left\lceil \frac{32 \ell \sqrt{ns}}{\varepsilon} + 16 \log \left( \frac{2 ns \ell}{\delta} \right) \right\rceil, \label{hypothesis:bin-subnetwork}
\end{align}
then with probability at least $1 - \delta$ there exist binary masks $\{ \bm{M}^{(i)} \}_{i=1}^{\ell}$ and $\{ \bm{\tilde{M}}^{(i)} \}_{i=1}^{\ell}$ for $\{ \bm{B}^{(i)} \}_{i=1}^{\ell}$ and $\{ \bm{U}^{(i)} \}_{i=1}^{\ell}$, respectively, such that the function $G : \mathbb{R}^d \to \mathbb{R}$ defined by
\begin{align}
    G(\bm{x}) 
    &:= G^{(\ell)} \circ \cdots \circ G^{(1)} (\bm{x}), \label{def:G-bin-subnetwork}
\end{align}
where 
\begin{align}
    G^{(i)}(\bm{x}) 
    &:= \sigma \left( (\bm{\tilde{M}}^{(i)} \odot \bm{U}^{(i)})^{\intercal} \sigma ( \varepsilon (\bm{M}^{(i)} \odot \bm{B}^{(i)}) \bm{x}) \right), \ \text{for} \ i \in [\ell-1] \\
    G^{(\ell)}(\bm{x}) 
    &:= (\bm{\tilde{M}}^{(i)} \odot \bm{U}^{(i)})^{\intercal} \sigma (\varepsilon (\bm{M}^{(i)} \odot \bm{B}^{(i)}) \bm{x}),
\end{align} 
satisfies
\begin{align}
    | G(\bm{x}) - F(\bm{x})| \leq \varepsilon, \ \text{for all} \ \| \bm{x} \|_2.
    \label{result:bin-subnetwork}
\end{align}
Additionally, $\| \bm{\tilde{M}} \|_0 = \| \bm{M} \|_0 \leq \frac{4 ns \ell^2 \sqrt{ns}}{\varepsilon}$.
\end{theorem}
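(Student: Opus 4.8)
The plan is to exploit the block structure of $G$: since each $G^{(i)}$ is itself a two-layer binary network, the depth-$2\ell$ network $G$ is a composition of $\ell$ two-layer blocks, and the $i$-th block is built to approximate the $i$-th layer $F^{(i)}$ of the target. Accordingly I would invoke Lemma~\ref{lem:bin-init-two-layer-3} to approximate each hidden layer $F^{(i)}(\bm z)=\sigma(\bm W^{(i)*}\bm z)$ for $i\in[\ell-1]$, and Lemma~\ref{lem:bin-init-two-layer-2} to approximate the linear, scalar-valued output layer $F^{(\ell)}(\bm z)=\bm W^{(\ell)*}\bm z$. The crucial bookkeeping choice is to run each lemma with per-layer error budget $\varepsilon'=\varepsilon/(2\ell)$ and per-layer failure probability $\delta'=\delta/\ell$. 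Substituting these into the width hypotheses of the lemmas reproduces the stated requirement on $k$, since $16\sqrt{ns}/\varepsilon'=32\ell\sqrt{ns}/\varepsilon$ and $2ns/\delta'=2ns\ell/\delta$, and a union bound over the $\ell$ blocks turns the per-block success probability $1-\delta'$ into the overall $1-\delta$.

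Next comes the error propagation. I would track the activations $\bm z_i$ of $F$ and $\hat{\bm z}_i$ of the masked $G$ after $i$ blocks, with $e_i=\|\hat{\bm z}_i-\bm z_i\|$ and $e_0=0$. Two structural facts drive the argument. First, the true activations never leave the unit ball: because $\|\bm W^{(i)*}\|_2\le 1$ and $\sigma$ is $1$-Lipschitz with $\sigma(\bm 0)=\bm 0$, one has $\|\bm z_i\|_2\le\|\bm z_{i-1}\|_2\le\cdots\le\|\bm x\|_2\le 1$, hence $\|\bm z_i\|_\infty\le 1$ and the lemmas apply at the in-domain point $\bm z_{i-1}$. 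Second, each $F^{(i)}$ is $1$-Lipschitz in $\ell_2$. Writing $e_i\le\|G^{(i)}(\hat{\bm z}_{i-1})-F^{(i)}(\hat{\bm z}_{i-1})\|+\|F^{(i)}(\hat{\bm z}_{i-1})-F^{(i)}(\bm z_{i-1})\|$, the second term is at most $e_{i-1}$, and since each block realizes a fixed linear map before its outer ReLU that approximates $\bm W^{(i)*}$ to accuracy $\varepsilon'$ on the unit ball, its discrepancy from $F^{(i)}$ scales linearly in the input norm, so the first term is at most $\varepsilon'(1+e_{i-1})$. This yields the recursion $e_i\le\varepsilon'+(1+\varepsilon')e_{i-1}$, hence $e_\ell\le(1+\varepsilon')^\ell-1\le\exp(\varepsilon/2)-1<\varepsilon$ for $\varepsilon\in(0,1)$, which is exactly the claimed bound~(\ref{result:bin-subnetwork}). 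The sparsity claim then follows by summing the per-block estimate $\|\bm M^{(i)}\|_0\le 2ns\sqrt{ns}/\varepsilon'=4ns\ell\sqrt{ns}/\varepsilon$ of Lemma~\ref{lem:bin-init-two-layer-3} over the $\ell$ blocks.

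The step I expect to be the main obstacle is this error propagation, specifically handling the fact that the approximated activations $\hat{\bm z}_{i-1}$ can drift slightly outside the unit $\ell_\infty$-ball on which the lemmas guarantee accuracy. A naive telescoping that assumes each block contributes an additive $\varepsilon'$ would only give $e_\ell\le\ell\varepsilon'$ and suggest the budget $\varepsilon'=\varepsilon/\ell$; the extra factor of two in the hypothesis is precisely what is needed to absorb the geometric amplification $(1+\varepsilon')^\ell$ arising from this domain drift. Making this rigorous relies on the observation that each block computes a fixed affine-linear map composed with $\sigma$, so its discrepancy from $F^{(i)}$ grows only linearly in $\|\hat{\bm z}_{i-1}\|$, keeping the amplification at $(1+\varepsilon')$ per layer rather than something uncontrolled; the remaining steps are routine substitution and a union bound.
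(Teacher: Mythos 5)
Your proposal is correct and takes essentially the same route as the paper: the paper's proof likewise applies Lemma~\ref{lem:bin-init-two-layer-3} layerwise with $\varepsilon' = \varepsilon/(2\ell)$ and $\delta' = \delta/\ell$ plus a union bound, and then defers the remaining error propagation to Theorem~A.6 of \citep{malach2020proving}. The recursion $e_i \leq \varepsilon' + (1+\varepsilon')e_{i-1}$ you derive --- with the factor $2$ in $\varepsilon'$ absorbing the $(1+\varepsilon')^{\ell}$ amplification caused by approximate activations drifting outside the unit ball, which works because each constructed block is exactly linear before its outer ReLU --- is precisely the deferred argument, so you have simply made explicit the step the paper outsources by citation.
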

\begin{proof}
Let $i \in [\ell-1]$. Using Lemma~\ref{lem:bin-init-two-layer-3} with $\varepsilon' = \frac{\varepsilon}{2 \ell}$ and $\delta' = \frac{\delta}{\ell}$, with probability at least $1 - \frac{\delta}{\ell}$ there exist $\bm{M}^{(i)}$ and $\bm{\tilde{M}}^{(i)}$ such that 
\begin{align}
    \| G^{(i)} (\bm{x}) - F^{(i)}(\bm{x}) \|_2 \leq \frac{\varepsilon}{2\ell}, \ \text{for all} \ \| \bm{x} \|_{\infty} \leq 1
\end{align}
and
\begin{align}
    \| \bm{\tilde{M}}^{(i)} \|_0 
    &= \| \bm{M}^{(i)} \|_0 
    \leq \frac{2 ns \sqrt{ns}}{\varepsilon'}
    = \frac{4 ns \ell \sqrt{ns}}{\varepsilon}.
\end{align}
The remainder of the proof follows from applying the same argument as in the proof of Theorem~A.6 from \citep{malach2020proving}.
\end{proof}

\section{Motivation for Framework to Identify MPTs} \label{sec:biprop-motivation}

Suppose that $f(\bm{x}; \bm{W}^*)$ with optimized weights $\bm{W}^*$ is a target network that we wish to approximate. Let $g( \bm{x}; \bm{W})$ denote the network in which we want to identify a MPT-1/32 that is an $\varepsilon$-approximation of $f(\bm{x}; \bm{W}^*)$, for some $\varepsilon > 0$. 

Now assume that $g(\bm{x}; \cdot)$ is Lipschitz continuous with constant $\kappa$, $\bm{B} \in \{-1,+1\}^m$ are binary parameters for $g$, and $\alpha \in \mathbb{R}$ is gain term. It follows that
\begin{align}
    \| g \left(\bm{x}; \alpha (\bm{M} \odot \bm{B}) \right) - f(\bm{x}; \bm{W}^*) \|
    &\leq \| g \left(\bm{x}; \alpha (\bm{M} \odot \bm{B} \right) - g(\bm{x}; \bm{M} \odot \bm{B}) \| \nonumber \\
    &\hspace{4mm} + \| g(\bm{x}; \bm{M} \odot \bm{W}) - f(\bm{x}; \bm{W}^*) \| \nonumber \\
    &< \kappa \| (\bm{M} \odot \bm{W}) - \alpha (\bm{M} \odot \bm{B}) \| \nonumber \\
    &\hspace{4mm} + \| g(\bm{x}; \bm{M} \odot \bm{W}) - f(\bm{x}; \bm{W}^*) \|. \label{eq:subnetwork-err-bound}
\end{align}
If we take $\bm{M}$ to be a fixed binary mask, we can minimize the error of binarizing the subnetwork parameters $\bm{M} \odot \bm{W}$ by solving the optimization problem
\begin{align}
\begin{array}{cc}
    \displaystyle \min_{\alpha, \bm{B}} & \| (\bm{M} \odot \bm{W}) - \alpha (\bm{M} \odot \bm{B}) \|^2 \\
    \text{s.t.} & \alpha \in \mathbb{R}, \ \bm{B} \in \{ -1,1 \}^{n}
\end{array} \label{prob:binary-weight-subnetwork}
\end{align}
where $\bm{M}$, $\bm{W}$, and $\bm{B}$ are stacked into vectors of some length, say $n$. As the pruning mask $\bm{M}$ is applied to both $\bm{W}$ and $\bm{B}$, solving problem (\ref{prob:binary-weight-subnetwork}) is equivalent to solving problem (2) in \citep{rastegari2016xnornet} with a different dimension. Hence, it immediately follows that one closed form solution for $\bm{B}$ in problem (\ref{prob:binary-weight-subnetwork}) is
\begin{align}
    \bm{B}^* 
    &= \sgn (\bm{W}). \label{eq:bin-weight-sub-B-sol}
\end{align}
Taking the derivative of the cost function in (\ref{prob:binary-weight-subnetwork}) with respect to $\alpha$ and setting it equal to zero yields
\begin{align}
    \alpha ( \bm{M} \odot \bm{B}^* )^{\intercal} ( \bm{M} \odot \bm{B}^* ) - ( \bm{M} \odot \bm{W} )^{\intercal} ( \bm{M} \odot \bm{B}^* ) 
    &= 0. \label{eq:bin-weight-sub-beta.1}
\end{align}
Recalling that $\bm{M} \in \{ 0, 1 \}^n$ and using (\ref{eq:bin-weight-sub-B-sol}), we have
\begin{align}
    ( \bm{M} \odot \bm{B}^* )^{\intercal} ( \bm{M} \odot \bm{B}^* ) 
    &= \sum_{i=1}^n (M_{i} B_i^*)^2 
    = \sum_{i=1}^n M_{i}^2 (\sgn(W_{i}))^2
    = \sum_{i=1}^n M_{i} 
    = \| \bm{M} \|_1
    \label{eq:bin-weight-sub-beta.2}
\end{align}
and
\begin{align}
    ( \bm{M} \odot \bm{W} )^{\intercal} ( \bm{M} \odot \bm{B}^* ) 
    &= \sum_{i=1}^n M_{i}^2 W_{i} \sgn(W_{i})
    = \sum_{i=1}^n M_{i} |W_{i}|
    = \| \bm{M} \odot \bm{W} \|_1. \label{eq:bin-weight-sub-beta.3}
\end{align}
Substituting (\ref{eq:bin-weight-sub-beta.2}) and (\ref{eq:bin-weight-sub-beta.3}) into (\ref{eq:bin-weight-sub-beta.1}) and solving for $\alpha$ yields the closed form solution
\begin{align}
    \alpha^* 
    &= \frac{\| \bm{M} \odot \bm{W} \|_1}{\| \bm{M} \|_1}. \label{eq:bin-weight-sub-alpha-sol}
\end{align}
Hence, $\alpha^*$ and $\bm{B}^*$ minimize the right hand side of (\ref{eq:subnetwork-err-bound}) and, consequently, reduce the approximation error of the MPT-1/32. So when the binarization error, $\| (\bm{M} \odot \bm{W}) - \alpha (\bm{M} \odot \sgn(\bm{W})) \|$, and the subnetwork error, $\| g(\bm{x}; \bm{M} \odot \bm{W}) - f(\bm{x}; \bm{W}^*) \|$, are sufficiently small then the binarized subnetwork $g \left(\bm{x}; \alpha (\bm{M} \odot \sgn(\bm{W})) \right)$ serves as a good approximation to the target network. 

These closed form expressions for the gain term and the binarized weights are the updates used for the gain term and binary subnetwork weights in \textbf{biprop} after updating the binary pruning mask.

\section{Comparison of MPTs with binary neural network SOTA}
\label{sec:sota-comp}

Here we provide a more exhaustive comparison of MPT--1/32 and MPT--1/1 on CIFAR-10 and ImageNet to SOTA methods -- BinaryConnect~\citep{courbariaux2015binaryconnect}, BNN~\citep{courbariaux2016binarized}, 
DoReFa-Net~\citep{zhou2016dorefa},
LQ-Nets~\citep{zhang2018lq},
BWN and XNOR-Net~\citep{rastegari2016xnornet},
ABC-Net~\citep{lin2017towards},
IR-Net~\citep{qin2020forward},
LAB~\citep{hou2016loss},
ProxQuant~\citep{bai2018proxquant},
DSQ~\citep{gong2019differentiable}, and
BBG~\citep{shen2020balanced}. 
Results for CIFAR-10 can be found in Tables~\ref{table:app-cifar-mpt-1-32} and \ref{table:app-cifar-mpt-1-1} and results for ImageNet can be found in Tables~\ref{table:app-imagenet-mpt-1-32} and \ref{table:app-imagenet-mpt-1-1}. Next to the MPT method we include the percentage of weights pruned and the layer width multiplier (if larger than 1) in parentheses. 

\begin{table*}[!t] 
\centering
\begin{small}
\begin{tabular}{@{}lrrr@{}}\toprule
\textbf{Method} & \textbf{Model} & \textbf{Top-1} & \textbf{Params} \\ \midrule
BinaryConnect & VGG-Small & 91.7 & 4.6 M \\ \hdashline
BWN & VGG-Small & 90.1 & 4.6 M \\ \hdashline
DoReFa-Net & ResNet-20 & 90.0 & 0.27 M \\ \hdashline
LQ-Nets & ResNet-20 & 90.1 & 0.27 M \\ \hdashline
LAB & VGG-Small & 89.5 & 4.6 M \\ \hdashline
ProxQuant & ResNet-56 & 92.3 & 0.85 M \\ \hdashline
DSQ & ResNet-20 & 90.2 & 0.27 M \\ \hdashline
IR-Net & ResNet-20 & 90.8 & 0.27 M \\ \hdashline
Full-Precision & ResNet-18 & 93.02 & 11.2 M \\ \hline
MPT-1/32 (95) & VGG-Small & 91.48 & 0.23 M \\ \hdashline
MPT (80) & ResNet-18 & 94.66 & 2.2 M \\ \hdashline
MPT (80) +BN & \textbf{ResNet-18} & \textbf{94.8} & \textbf{2.2 M} \\ 
\bottomrule
\end{tabular}
\end{small}
\caption{Comparison of MPT-1/32 with Trained Binary (1/32) Networks on CIFAR-10}
\label{table:app-cifar-mpt-1-32}
\end{table*} 

\begin{table*}
\centering
\begin{small}
\begin{tabular}{@{}lrrrr@{}}\toprule
\textbf{Method} & \textbf{Model} & \textbf{Top-1} & \textbf{Params} \\ \midrule
BNN & VGG-Small & 89.9 & 4.6 M \\ \hdashline
XNOR-Net & VGG-Small & 89.8 & 4.6 M \\ \hdashline
DoReFa-Net & ResNet-20 & 79.3 & 0.27 M \\ \hdashline
BBG & ResNet-20 & 85.3 & 0.27 M \\ \hdashline
LAB & VGG-Small & 87.7 & 4.6 M \\ \hdashline
DSQ & VGG-Small & 91.7 & 4.6 M \\ \hdashline
IR-Net & ResNet-18 & 91.5 & 4.6 M \\ \hdashline
{Full-Precision} & VGG-Small & 93.6 & 4.6 M \\ \hline
MPT (75, 1.25x) & VGG-Small & 88.49 & 1.44 M \\ \hdashline
\textbf{MPT (75, 1.25x) +BN} & \textbf{VGG-Small} & \textbf{91.9} & \textbf{1.44 M} \\
\bottomrule
\end{tabular}
\end{small}
\caption{Comparison of MPT-1/1 with Trained Binary (1/1) Networks on CIFAR-10}
\label{table:app-cifar-mpt-1-1}
\end{table*}


\begin{table*}[!t] 
\centering
\begin{small}
\begin{tabular}{@{}lrrr@{}}\toprule
\textbf{Method} & \textbf{Model} & \textbf{Top-1} & \textbf{Params} \\ \midrule
ABC-Net & ResNet-18 & 62.8 & 11.2 M \\ \hdashline
BWN & ResNet-18 & 60.8 & 11.2 M \\ \hdashline
BWNH & ResNet-18 & 64.3 & 11.2 M \\ \hdashline
PACT & ResNet-18 & 65.8 & 11.2 M \\ \hdashline
IR-Net & ResNet-34 & 70.4 & 21.8 M \\ \hdashline
Quantization-Networks & ResNet-18 & 66.5 & 11.2 M \\ \hdashline
Quantization-Networks & ResNet-50 & 72.8 & 25.6 M \\ \hdashline
Full-Precision & ResNet-34 & 73.27 & 21.8 M \\ \hline
MPT (80) & WRN-50 & 72.67 & 13.7 M \\ \hdashline
\textbf{MPT (80) +BN} &
\textbf{WRN-50} & \textbf{74.03} & \textbf{13.7 M} \\
\bottomrule
\end{tabular}
\end{small}
\caption{Comparison of MPT-1/32 with Trained Binary (1/32) Networks on ImageNet}
\label{table:app-imagenet-mpt-1-32}
\end{table*}

\begin{table*}[!t] 
\centering
\begin{small}
\begin{tabular}{@{}lrrr@{}}\toprule
\textbf{Method} & \textbf{Model} & \textbf{Top-1} & \textbf{Params} \\ \midrule
BNN & AlexNet & 27.9 & 62.3 M \\ \hdashline
XNOR-Net & AlexNet & 44.2 & 62.3 M\\ \hdashline
ABC-Net & ResNet-18 & 42.7 & 11.2 M \\ \hdashline
ABC-Net & ResNet-34 & 52.4 & 21.8 M \\ \hdashline
TSQ & AlexNet & 58.0 & 62.3 M\\ \hdashline
WRPN & ResNet-34 & 60.5 & 21.8 M \\ \hdashline
HWGQ & AlexNet & 52.7 & 62.3 M\\ \hdashline
IR-Net & ResNet-18 & 58.1 & 11.2 M \\ \hdashline
\textbf{IR-Net} & \textbf{ResNet-34} & \textbf{62.9} & \textbf{21.8 M} \\ \hdashline
Full-Precision & ResNet-34 & 73.27& 21.8 M\\ \hline
MPT (60) & WRN-34 & 45.06 & 19.3 M \\ \hdashline
MPT (60) +BN & WRN-34 & 52.07 & 19.3 M \\
\bottomrule
\end{tabular}
\end{small}
\caption{Comparison of MPT-1/1 with Trained Binary (1/1) Networks on ImageNet}
\label{table:app-imagenet-mpt-1-1}
\end{table*}

\section{Comparison to edgepopup for MPT-1/32}
\label{sec:mpt-vs-edgepopup}
Note that binarization step of \textbf{biprop} can be avoided while finding MPT-1/32 -- by initializing (and pruning) our backbone neural network with binary initialization (e.g., edgepopup with Signed Constant initialization~\citep{ramanujan2019whats}). In this specific instance, \textbf{biprop} boils down to edgepopup with proper scaling. Next, we compare the performance of MPT-1/32 networks identified using these two approaches. 
Both networks presented below use the same hyperparameter configurations and are trained for 250 epochs on the CIFAR-10 dataset. We initialize the networks identified with edgepopup using the Signed Constant initialization as it yielded their best performance. MPT-1/32 networks identified using \textbf{biprop} are initialized using the Kaiming Normal initialization. We plot the average over three experiments for each pruning percentage and bars extending to the minimum and maximum accuracy for each pruning percentage. Additionally, for each network we include the Top-1 accuracy of a dense model with learned weights. These plots can be found in Figure~\ref{fig:mpt-vs-edgepopup}. We find that the performance of MPT-1/32 identified with \textbf{biprop} outperforms networks identified using edgepopup. This highlights the benefit of binarization (in conjunction with pruning) as a learning strategy.  

\begin{figure}[h]
    \centering
    \includegraphics[width=\textwidth, trim={6cm 0 6cm 2.5cm}, clip]{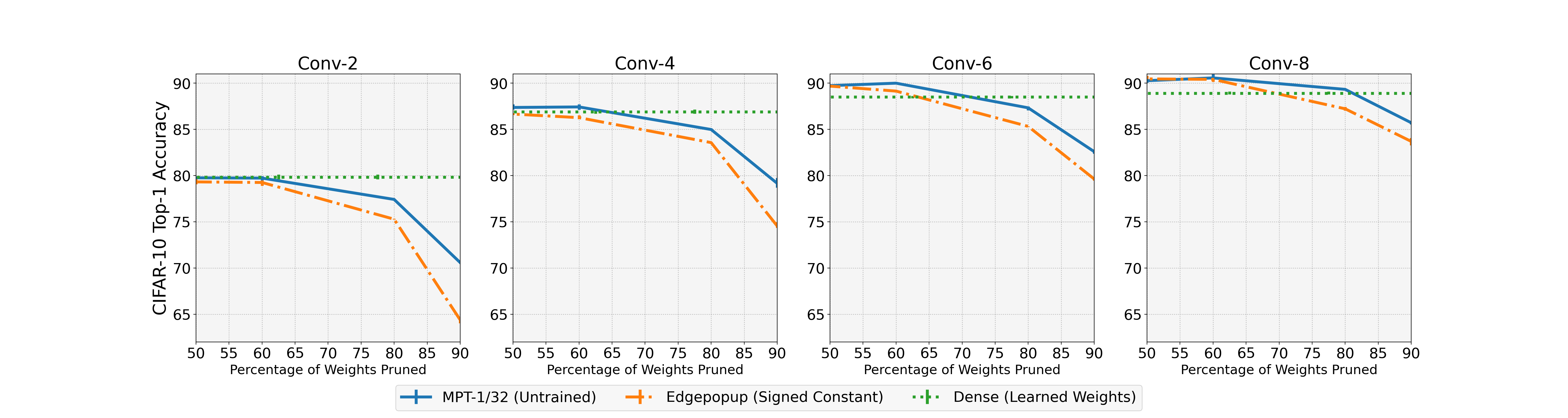}
    \caption{{\bfseries Comparing biprop and edgepopup}: Comparing the Top-1 accuracy of MPT-1/32 to binary weight networks of the same size identified using edgepopup on CIFAR-10.}
    \label{fig:mpt-vs-edgepopup}
\end{figure}

\section{Related Work}
\label{sec:relatedwork}
\subsection{Pruning}
We categorize pruning methods based on whether a model is pruned either after the training or before the training (see \citep{cheng2017survey} for a comprehensive review).

\paragraph{Post-Training Pruning.}
The traditional pruning methods leverage a three-stage pipeline -- pre-training  (a large model), pruning, and fine-tuning. The main distinction lies among these approaches is what type of criteria is used for pruning. 
One of the most popular approach is the magnitude-based pruning where the weights with the magnitude below a certain threshold are discarded~\citep{hagiwara1993removal}.
Further, certain penalty term (e.g., $l_1$, $l_2$ or lasso weight regularization) can be used during training to encourage a model to learn certain smaller magnitude weights and removing them post-training~\citep{weigend1991generalization}. 
Models can also be pruned by measuring the importance of weights by computing the sensitivity of the loss function when weights are removed and prune those which cause the smallest change in the loss~\citep{lecun1990optimal}.

\paragraph{Pruning Before Training.}

Thus far, we have have discussed methods for pruning pretrained DNNs. 

Recently, \citep{frankle2018lottery} proposed the \emph{Lottery Ticket Hypothesis} and showed that  randomly-initialized neural networks contain sparse subnetworks that can be effectively trained from scratch when reset to their initialization.
Further, \citep{liu2018rethinking} showed that the training an over-parameterized model is often not necessary to obtain an efficient final model and network architecture itself is more important than the remaining weights after pruning pretrained networks. These findings has revived interest in finding approaches for searching sparse and trainable subnetworks. For example, \citep{lee2018snip, 2wang2020pruning, you2019drawing, 1wang2020picking} explored efficient approaches to search for these sparse and trainable subnetworks.
Along this line of work, a striking finding was reported by
~\citep{zhou2019deconstructing, ramanujan2019whats} showing that randomly-initialized neural networks contain sparse subnetworks that achieve good performance without any training.
\citep{malach2020proving, pensia2020optimal} provided theoretical evidences for this phenomenon and showed that one can approximate any target neural network, by pruning a sufficiently over-parameterized network of random weights.

\subsection{Binarization}
Similar to pruning, we categorize binarization methods based on whether a model is binarized either after the training or during the training (see \citep{qin2020binary} for a comprehensive review).

\paragraph{Post-Training Binarization.}
To the best of our knowledge, none of the post-training schemes have been successful in binarizing pretrained models with or without retraining to achieve reasonable test accuracy. Most existing works~\citep{han2015deep, zhou2017incremental} are limited to ternary weight quantization.

\paragraph{Training-Aware Binarization.}
There are several efforts to improve the performance of BNN training. This is a challenging problem as binarization introduces discontinuities which
makes differentiation during backpropogation difficult. 
Binaryconnect~\citep{courbariaux2015binaryconnect} established how to train networks with binary weights within the familiar back-propagation paradigm.
BinaryNet~\citep{courbariaux2016binarized} further quantize both the weights and the activations to 1-bit values.
Unfortunately, these early schemes resulted in a staggering drop in the accuracy compared to their full precision counterparts. 
In an attempt to improve the performance, XNOR-Net~\citep{rastegari2016xnornet} proposed to add a real-valued channel-wise scaling factor. 
Dorefa-Net~\citep{zhou2016dorefa} extends XNOR-Net to accelerate the training process using quantized gradients.
ABC-Net~\citep{lin2017towards} improved the performance by using more weight bases and activation bases at the cost of increase in memory and computation. There have also been efforts in making modifications to the network architectures to make them amenable for the binary neural network training. For example, Bireal-Net~\citep{liu2018bi} added layer-wise identity short-cut, and AutoBNN~\citep{shen2020balanced} proposed to widen or squeeze the channels in an automatic manner. \citep{han2020training} proposed to learn to binarize neurons with noisy supervision.
Some efforts also have been carried out to designing gradient estimators extending straight-through estimator (STE) ~\citep{bengio2013estimating} for accurate gradient back-propagation. DSQ~\citep{gong2019differentiable} used differentiable soft quantization to have accurate gradients in backward propagation.
On the other hand, PCNN~\cite{gu2019projection} proposed a new discrete back-propagation via projection algorithm to build BNNs.

\subsection{Other Related Directions}
\cite{gaier2019weight} proposed a search method for neural network architectures that can already perform a task without any explicit weight training, i.e., each weight in the network has the same shared value. Recent work in randomly wired neural networks~\citep{xie2019exploring} showed that constructing neural networks with random graph algorithms often outperforms a manually engineered architecture. As opposed to fixed wirings in~\citep{xie2019exploring}, \citep{wortsman2019discovering} learned the network parameters as well as the structure. 
This show that finding a good architecture is akin to finding a sparse subnetwork of the complete graph.

\end{document}